\theoremstyle{plain}      
\newtheorem{theorem}{Theorem}   
\newtheorem{corollary}[theorem]{Corollary}
\newtheorem{proposition}[theorem]{Proposition}
\newtheorem{lemma}[theorem]{Lemma}
\title{FRoD: Full-Rank Efficient Fine-Tuning with Rotational Degrees for Fast Convergence}
\author{
    Guoan Wan\textsuperscript{\rm 1},
    Tianyu Chen\textsuperscript{\rm 1},
    Fangzheng Feng\textsuperscript{\rm 2},
    Haoyi Zhou\textsuperscript{\rm 1},
    Runhua Xu\textsuperscript{\rm 1}\thanks{Corresponding author}
}
\begin{document}

\maketitle
\begin{bibunit}[aaai2026]

\begin{abstract}
Parameter-efficient fine-tuning (PEFT) methods have emerged as a practical solution for adapting large foundation models to downstream tasks, reducing computational and memory costs by updating only a small subset of parameters. Among them, approaches like LoRA aim to strike a balance between efficiency and expressiveness, but often suffer from slow convergence and limited adaptation capacity due to their inherent low-rank constraints. This trade-off hampers the ability of PEFT methods to capture complex patterns needed for diverse tasks. 
To address these challenges, we propose FRoD, a novel fine-tuning method that combines hierarchical joint decomposition with rotational degrees of freedom. 
By extracting a globally shared basis across layers and injecting sparse, learnable perturbations into scaling factors for flexible full-rank updates, FRoD enhances expressiveness and efficiency, leading to faster and more robust convergence.
On 20 benchmarks spanning vision, reasoning, and language understanding, FRoD matches full model fine-tuning in accuracy, while using only 1.72\% of trainable parameters under identical training budgets.
\end{abstract}

\begin{links}
    \link{Code}{https://github.com/Bane-Elvin/AAAI2026-FRoD}
\end{links}

\section{Introduction}

Large foundation models~\cite{liu_roberta_2019,radford_learning_2021} have unlocked powerful representations across domains, but their ever-growing parameter scales bring drastic computational and memory costs for downstream adaptation. 
Parameter-efficient fine-tuning (PEFT) paradigms—spanning additive modules~\cite{hu_llm-adapters_2023}, selective tuning~\cite{guo_parameter-efficient_2021}, and reparameterization~\cite{aghajanyan_intrinsic_2020}—mitigate this overhead by updating only sparse subsets of weights.
Among these, LoRA~\cite{hu_lora_2021} injects low-rank adapters to enable efficient fine-tuning at virtually no extra inference cost.
Despite these advances, achieving faster convergence with fewer parameters remains an ongoing research challenge.


\begin{figure}[t]
\centering
\includegraphics[width=\linewidth]{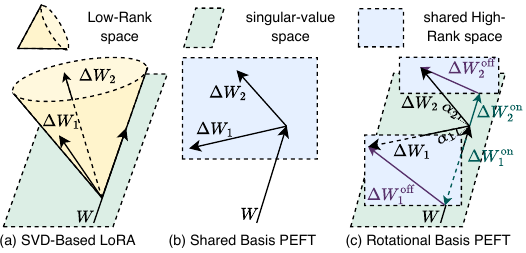}
\caption{
Update Space and Rank: FRoD (joint matrix decomposition and off-axis sparse spaces) vs. SVD-based and random PEFT methods.
}
\label{fig:intro}
\end{figure}

Prior approaches (\figurename~\ref{fig:intro}) like SVD-based LoRA~\cite{sun_singular_2022} use leading singular vectors for better initialization, 
yet storing full singular vector matrices~\cite{sun_svfit_2024, lingam_svft_2024} demands prohibitive memory, 
forcing most variants~\cite{meng_pissa_2024, wang_milora_2025, fan_make_2025} to keep only the top rank-$r$ directions—thereby restricting update rank and adaptation capacity.
Shared random-basis PEFT methods~\cite{kopiczko_vera_2023, koohpayegani_nola_2023} further cut parameters by reusing fixed random subspaces across layers, but all updates are confined to theses subspaces, 
limiting task-specific expressiveness and ultimately hampering final performance~\cite{li_vb-lora_2024, albert_randlora_2025}.



\begin{figure*}[t]
\centering
\includegraphics[width=\linewidth]{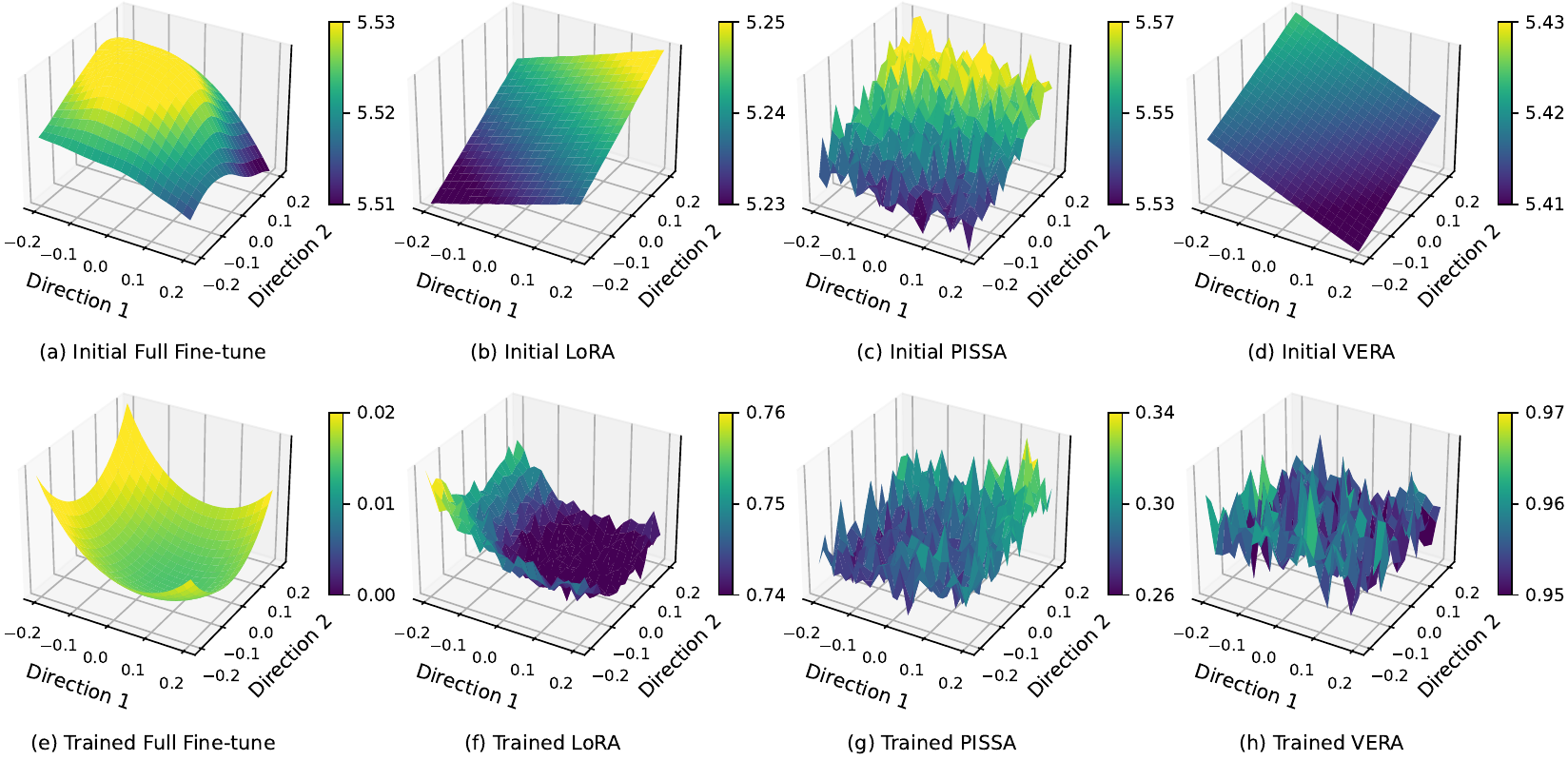}
\caption{
Comparison of loss landscapes for four representative fine-tuning methods in a principal parameter subspace.
Each subplot shows the loss surface as a function of two principal directions ($\alpha$ and $\beta$) in parameter space, with the vertical axis denoting the loss value. 
The top row (a–d) presents landscapes at model initialization; the bottom row (e–h) shows them after convergence. 
These visualizations provide a comparative geometric view of how various fine-tuning strategies shape the optimization landscape during training.
}
\label{fig:motivation}
\end{figure*}


We propose \textbf{\textit{FRoD}}, a full-rank, efficient fine-tuning method that couples strong expressiveness with fast convergence. FRoD starts by performing a hierarchical joint decomposition to extract a global shared basis and per-layer diagonal strength matrices $\Sigma_i$; updating only $\Sigma_i$ delivers near-zero-cost on-axis fine-tuning. 
To step beyond those axes, we append a sparse, learnable matrix $S_i$ to each layer. Its off-diagonal entries introduce off-axis interactions, injecting rotational degrees of freedom into the latent space (\figurename~\ref{fig:intro}c). 
This design maintains the foundational weight structure while offering flexible adaptation directions, supporting fast and stable optimization across diverse tasks.


\begin{itemize}
    \item We propose a hierarchical joint decomposition method that extracts a shared latent basis across all model layer, 
    capturing model-wide structure from the outset.
    \item We introduce sparse, learnable updates to enable flexible off-axis adaptation, expanding the update space and improving both convergence and expressiveness.
    \item Extensive experiments on 20 vision, reasoning, and language benchmarks show that FRoD matches the accuracy of full fine-tuning, using just 1.72\% of the parameters.
\end{itemize}

\section{Background and Motivation}

\subsection{Background: LoRA and Its Variants}




PEFT methods, such as LoRA and its variants, adopt a unified adaptation scheme expressed as $W = W_0 + \Delta W$, where $W_0$ denotes frozen pretrained weights and $\Delta W$ defines the learnable adaptation.
Typically:
\begin{itemize}
    \item \textit{LoRA} replaces $\Delta W$ with the product of two trainable low-rank matrices, $BA$, reducing parameter costs.
    \item \textit{VeRA} extends LoRA by introducing learnable scaling vectors with shared random matrices, $\Delta W = \Lambda_b B \Lambda_d A$, further decreasing redundancy in optimizer states.
    \item \textit{PiSSA} imposes structural priors via truncated SVD: $W_0 = USV^\top$, initializing $B = U_{[:, :r]} S_{[:r, :r]}^{1/2}$ and $A  = S_{[:r, :r]}^{1/2} V_{[:, :r]}^\top$ using dominant singular directions to facilitate convergence along principal axes.
\end{itemize}


\begin{figure*}[!t]
\centering
\includegraphics[width=\linewidth]{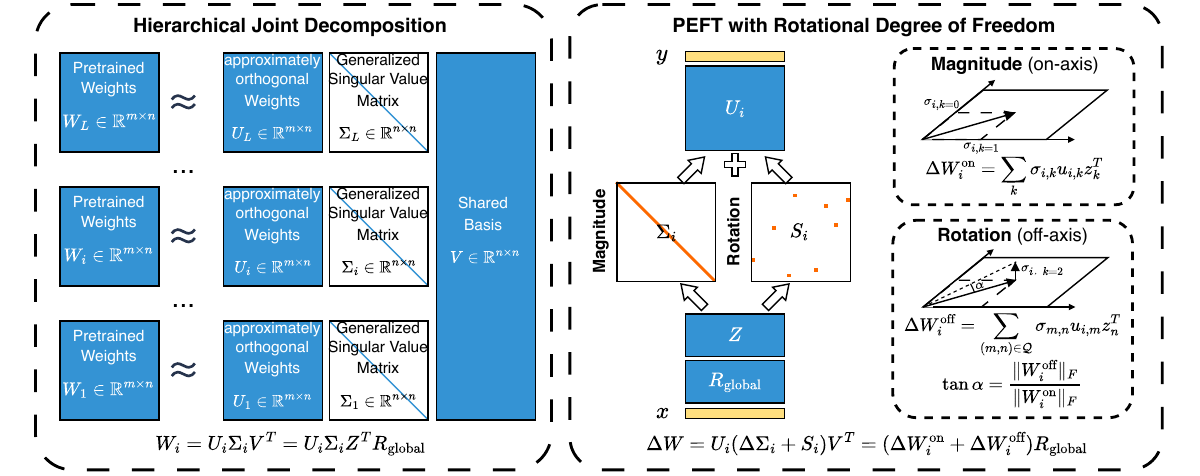}
\caption{Overview of FRoD. FRoD is a two-stage approach: hierarchical joint decomposition initializes the model by extracting a global shared basis, and sparse perturbations introduce rotational degrees of freedom during fine-tuning.}
\label{fig:method}
 
\end{figure*}

\figurename~\ref{fig:motivation} visualizes the loss landscapes under different PEFT schemes, highlighting their geometric properties before and after fine-tuning. 
This illustration motivates key aspects of our method; 
owing to space limitations, detailed discussion of related work is relegated to the Appendix A \& B.

\subsection{Motivation: Convergence Efficiency}


The convergence efficiency of PEFT schemes is tightly linked to their loss landscape geometry at initialization. 
As shown in \figurename~\ref{fig:motivation}b, LoRA's low-norm random initialization yields a smooth, isotropic loss surface, supporting larger learning rates and stable early optimization. 
In contrast, PiSSA (\figurename~\ref{fig:motivation}c) aligns updates with dominant singular directions, sharpening the curvature and constraining the attainable step size, often resulting in slower convergence. 
Despite distinct initializations, both LoRA and PiSSA ultimately converge to similarly sharp optima (\figurename~\ref{fig:motivation}f--g), reflecting the inherent limitations of low-rank update spaces for capturing task complexity.
In contrast, VeRA (\figurename~\ref{fig:motivation}d) sustains a smooth loss surface akin to LoRA, enabling aggressive learning rates. 

These observations emphasize the critical role of structured initialization in shaping early convergence dynamics. 
Inspired by these findings, we propose a hierarchical joint decomposition framework that globally extracts shared subspaces. 
This approach supports invariant, structure-aware initialization across layers, providing a principled trade-off between convergence speed and parameter efficiency.

\subsection{Motivation: Parameter Degree of Freedom}

Full fine-tuning enables unrestricted optimization in the entire parameter space, exhibiting superior convergence speed and expressive capacity. In contrast, PEFT methods, despite their lower training and deployment costs, are often confined to limited update subspaces. This constraint not only impairs the linear descent trajectory but also compresses the representational flexibility in later stages, hindering adaptation to complex tasks. 
As depicted in \figurename~\ref{fig:motivation}, 
LoRA and VeRA share similar (zero) initialization but differ in update space geometry: 
LoRA preserves relative smoothness and flexibility (\figurename~\ref{fig:motivation}f), whereas VeRA’s highly constrained subspace results in a distorted loss landscape and reduced convergence stability (\figurename~\ref{fig:motivation}d).

These phenomena suggest that it is not initialization alone but, more fundamentally, the dimensionality of the effective update space, often termed the Parameter Degrees of Freedom (PDoF), that determines adaptation capability.
Enhancing PDoF requires expanding the space of feasible updates, preferably in a principled, structured way. 
We operationalize this via rotations in both the global shared basis and its orthogonal complements. 
While direct rotation via dense orthogonal matrices is impractical in high dimensions, we approximate this mechanism through additive sparse matrices, which efficiently inject rotational flexibility with minimal computational overhead.

\section{Method}

FRoD enhances parameter-efficient fine-tuning through a two-stage framework designed to maximize both convergence speed and expressiveness, as illustrated in \figurename~\ref{fig:method}.
First, we perform a hierarchical joint decomposition that uncovers a globally shared subspace across layers, while preserving complementary layer-specific orthogonal components, yielding a principled and unified parameter initialization.
Building upon this foundation, FRoD introduces a novel rotational PEFT mechanism that decomposes parameter updates into magnitude changes along the original directions and sparse off-axis rotations. This design injects rotational degrees of freedom into the update space, effectively expanding model expressiveness while maintaining parameter efficiency.

\begin{table*}[t]
\centering

\begin{tabular}{lccc}
\toprule
 & FRoD & LoRA & VeRA \\
\midrule
Weights & $L(mn + s\,n^2 + n) + n^2 $ & $L(mn + mr + nr)$ &$ L(mn + r + n) + mr + nr $ \\
Optim States & $2 L(s\,n^2 + n)$ & $2 L(m r + n r)$ & $2 L(r + n)$ \\
\bottomrule
\end{tabular}
\caption{Trainable parameter and optimizer states comparison. Assume $W \in \mathbb{R} ^ {m \times n}$, rank $r$.}
\label{tab:mem-comparison}
\end{table*}


\subsection{Hierarchical Joint Decomposition}
We propose a hierarchical joint decomposition-based initialization scheme by performing joint decomposition \cite{kempf_higher-order_2023} across layers and categories. 

Given a set of pre-trained matrices $\{ W_i^{(c)} \}$, where $i$ indexes the layer and $c \in \mathbb{C}$ denotes the category, our objective is to extract a shared latent basis for subsequent fine-tuning.

For each category $c$, we concatenate the weight matrices $W_i^{(c)} \in \mathbb{R}^{m \times n}$ from all $L$ layers vertically:
\begin{equation}
W^{(c)} = \begin{bmatrix} W_{1}^{(c)}  W_{2}^{(c)} \cdots  W_{L}^{(c)} \end{bmatrix}^T \in \mathbb{R}^{M_c \times n},
\end{equation}
where $M_c = L \cdot m$. 

A thin QR decomposition yields:
\begin{equation}
W^{(c)} = Q^{(c)} R_\text{global},
\end{equation}
with $Q^{(c)} \in \mathbb{R}^{M_c \times n}$ orthonormal and $R_\text{global} \in \mathbb{R}^{n \times n}$ upper triangular.
Each $Q^{(c)}$ is partitioned into layer-wise blocks $Q_i$ according to the row dimensions of $W_{i}^{(c)}$.

Next, we regularize and aggregate the category-wise projections by computing:
\begin{equation}
T_\pi = \frac{1}{|\mathbb{C}|} \sum_{c \in \mathbb{C}} \left( (Q^{(c)})^\top Q^{(c)} + \pi I \right)^{-1},
\end{equation}
where \( \pi > 0 \) is a small constant to stabilize the inversion.
We diagonalize $T_\pi$ as:
\begin{equation}
T_\pi = Z \Lambda Z^\top,
\end{equation}
where $Z \in \mathbb{R}^{n \times n}$ is orthogonal. 
The resulting shared latent basis is given by:
\begin{equation}
V^\top = Z^\top R_\text{global}.
\end{equation}


For each layer-category pair, we compute the local full-rank representation:
\begin{equation}
B_i = Q_i Z, \quad \Sigma_i = \mathrm{diag}(\|B_i\|_2), \quad U_i = B_i \Sigma_i^{-1},
\end{equation}
which yields the approximation:
\begin{equation}
W_{i}^{(c)} \approx U_i \Sigma_i V^\top.
\label{eq:w_decom}
\end{equation}




This hierarchical joint decomposition extracts a global orthonormal basis $V$ across categories and layers, while the local orthogonal components $U_i$ and scaling factors $\Sigma_i$ retain layer-specific information. 
The reconstructed factors provide high-fidelity initialization for our fine-tuning modules.



\subsection{PEFT with Rotational Degree of Freedom}
\label{sec:rdof}

Building upon the joint decomposition described in the previous section, we propose a PEFT method that augments the factorized form with a learnable sparse perturbation. 
Instead of directly using the decomposition in Eq.~\ref{eq:w_decom}, We introduce a new trainable matrix \( S_i \in \mathbb{R}^{n \times n} \) and redefine the adaptation matrix as
\begin{equation}
W_i' = U_i (\Sigma_i + S_i) V^\top,
\end{equation}
where $S_i$ adopts random off-diagonal sparsity matrix, and $s = \mathrm{nnz}(S_i)/n^2$ denotes its density, enabling rotation in the latent space with minimal parameter overhead.

\noindent\textbf{Rotational Degree of Freedom(RDoF).} 
The sparse perturbation \( S_i \) introduces a RDoF in the latent space. For an input vector \( x \in \mathbb{R}^n \), the original transformation is:
\begin{equation}
y = W_i x = U_i \Sigma_i V_i^\top x.
\end{equation}

Let \( Z \in \mathbb{R}^{n \times n} \) be a transformation matrix, and define the latent coordinates as \( c_v = R_{\text{global}} x \), where \( R_{\text{global}} \in \mathbb{R}^{n \times n} \) is an orthogonal basis. 
The transformation becomes:
\begin{equation}
    y = U_i \Sigma_i Z^\top c_v,
\end{equation}
representing an axis-aligned scaling in the latent space.

With perturbation \( S_i \), the adapted transformation becomes
\begin{equation}
    y' = W_i' x = U_i (\Sigma_i + S_i) V_i^\top x = U_i (\Sigma_i + S_i) Z^\top c_v.
\end{equation}

Thus, the update matrix is 
\begin{equation}
    \Delta W_i = W_i' - W_i = U_i (\Delta \Sigma_i +S_i)Z^\top R_\text{global},
\end{equation}
which can be decomposed into orthogonal components:
\begin{itemize}
    \item \textbf{Magnitude (on-axis)}: \( \Delta W_i^{\text{on}} = U_i \Delta \Sigma_i Z^\top \), adjusting the generalized singular values (intensity).
    \item \textbf{Rotation (off-axis)}: \( \Delta W_i^{\text{off}} = U_i S_i Z^\top \), introducing cross-dimensional coupling via the off-diagonal \( S_i \).
\end{itemize}

These components satisfy an orthogonality condition and thereby enable a geometric decomposition of the update.
By explicitly modeling this rotational degree of freedom with a sparse, learnable matrix, our method significantly increases the expressiveness of adaptation while retaining strong parameter efficiency. 

\begin{table*}[t]
\centering
\begin{tabular}{lccccccccc}
\toprule
Method & \# Params (\%) & Cars & DTD & EuroSAT & GTSRB & RESISC45 & SUN397 & SVHN & Average \\
\midrule
Full FT & 100 & 60.33 & 73.88 & 98.96 & 98.30 & 93.65 & 53.84 & 96.78 & 82.25 \\
LoRA $\dagger$ & 1.49 & 41.02 & 70.15 & 98.66 & 96.51 & 90.38 & 47.51 & 95.39 & 77.09 \\
\midrule
PiSSA $\dagger$ & 1.49 & 40.41 & 69.62 & 98.48 & 95.84 & 90.58 & 47.21 & 95.84 & 76.85 \\
MiLoRA $\dagger$ & 1.49 & 39.77 & 70.48 & 98.19 & 97.52 & 89.92 & 45.38 & 95.49 & 76.68 \\
GOAT $\dagger$ & 2.24 & 53.50 & \textbf{75.32} & 98.82 & \textbf{98.17} & \textbf{93.46} & 54.53 & \textbf{96.62} & 81.49 \\
\midrule

VeRA & 0.29 & 60.37 & 73.03 & 98.44 & 97.30 & 89.71 & 50.25 & 95.93 & 80.71\\
RandLoRA & 1.49 & 42.36 & 69.68 & 98.37 & 96.88 & 89.03 & 47.47 & 95.60 & 77.06\\
DoRA $\dagger$ & 1.49 & 40.75 & 71.91 & \textbf{98.89} & 97.71 & 90.19 & 47.54 & 95.46 & 77.49 \\
\midrule
FRoD(s=0.01) & 1.29 & 61.30 & 71.22 & 98.52 & 97.66 & 91.43 & \textbf{59.62} & 96.01 & 82.25 \\
FRoD(s=0.02) & 2.49 & \textbf{62.13} & 73.24 & \textbf{98.89} & 97.84 & 91.95 & 59.16 & 96.42 & \textbf{82.80} \\
\bottomrule
\end{tabular}
\caption{We evaluate CLIP ViT-B/32 across StanfordCars, DTD, EuroSAT, GTSRB, RESISC45, SUN397, and SVHN datasets. The symbol $\dagger$ indicates that the results are taken from \cite{fan_make_2025}.}
\label{table:IC}
\end{table*}

\noindent\textbf{Preservation of Singular-Value Spectrum.}
To provide a theoretical foundation for our rotational PEFT framework,
we first establish a spectral-stability theorem for sparse perturbations, 
followed by two corollaries: 
(i) an orthogonal decomposition of on-axis and off-axis updates, 
and (ii) an angular representation of the total update. 
Collectively, these results show that sparse off-axis rotations keep the dominant singular-value directions essentially intact while expanding the update space in a controlled way. 

\begin{theorem}[Spectral Stability Under Sparse Perturbations]
\label{thm:spectral_stability}
Let $\Sigma_i\in\mathbb{R}^{n\times n}$ be a diagonal matrix of singular values and let  
$S_i\in\mathbb{R}^{n\times n}$ be a perturbation such that  
\begin{equation}
\|S_i\|_0\ll n^2 
\quad\text{and}\quad
\max_{p,q}|s_{pq}|\le\varepsilon.
\end{equation}
Then, for every $k$,
\begin{equation}
\bigl|\sigma_k(\Sigma_i+S_i)-\sigma_k(\Sigma_i)\bigr|
\;\le\;
\|S_i\|_2 ,
\end{equation}
so the dominant singular directions encoded by $\Sigma_i$ are preserved up to a deviation bounded by $\|S_i\|_2$.
\end{theorem}

\begin{corollary}[Orthogonal Decomposition of Updates]
\label{cor:orthogonality}
Define the on-axis and off-axis updates as
\(
\Delta W_i^{\text{on}},\;\Delta W_i^{\text{off}}\in\mathbb{R}^{m\times n}.
\)
Then
\begin{equation}
\langle\Delta W_i^{\text{on}},\Delta W_i^{\text{off}}\rangle_F \;=\; 0 ,
\end{equation}
i.e.\ the two update components are orthogonal in the Frobenius inner product.
\end{corollary}

\begin{corollary}[Angular Representation of the Total Update]
\label{cor:angular_representation}
Let
\(
\tan\alpha=\|\Delta W_i^{\text{off}}\|_F/\|\Delta W_i^{\text{on}}\|_F
\)
and set
\(
\hat U_{\text{on}}=\Delta W_i^{\text{on}}/\|\Delta W_i^{\text{on}}\|_F,\;
\hat U_{\text{off}}=\Delta W_i^{\text{off}}/\|\Delta W_i^{\text{off}}\|_F.
\)
Then
\begin{equation}
\Delta W_i
=\|\Delta W_i\|_F\bigl(\cos\alpha\,\hat U_{\text{on}}
+\sin\alpha\,\hat U_{\text{off}}\bigr).
\end{equation}

For small $\alpha$ (when $\|\Delta W_i^{\text{on}}\|_F\gg\|\Delta W_i^{\text{off}}\|_F$),
\begin{equation}
\Delta W_i \approx \|\Delta W_i^{\text{on}}\|_F \left( \hat{U}_{\text{on}} + \alpha \cdot \hat{U}_{\text{off}} \right) R_\text{global}.
\end{equation}

Thus, the total update can be interpreted as a rotation from the axis-aligned direction by a small angle $\alpha$, reflecting a controllable expansion as Shared Random-Basis PEFT. 
\end{corollary}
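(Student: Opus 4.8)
The plan is to treat this as an elementary exercise in two–dimensional geometry inside the Frobenius inner‑product space, with Corollary~\ref{cor:orthogonality} as the only non‑trivial input. First I would record the additive decomposition $\Delta W_i=\Delta W_i^{\text{on}}+\Delta W_i^{\text{off}}$ (up to the common right factor $R_\text{global}$), which follows immediately from $\Delta W_i=U_i(\Delta\Sigma_i+S_i)Z^\top R_\text{global}$ by splitting $\Delta\Sigma_i+S_i$. Since $R_\text{global}$ (equivalently $V_i^\top=Z^\top R_\text{global}$) is applied identically to every term and is taken to act orthonormally in the paper's convention, I would carry it along as a fixed right multiplier and suppress it in the norm and inner‑product bookkeeping, reinstating it only on the final line; this is precisely why it reappears in the small–angle display.

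Next I would invoke Corollary~\ref{cor:orthogonality}: because $\langle\Delta W_i^{\text{on}},\Delta W_i^{\text{off}}\rangle_F=0$, the Pythagorean identity gives $\|\Delta W_i\|_F^2=\|\Delta W_i^{\text{on}}\|_F^2+\|\Delta W_i^{\text{off}}\|_F^2$. Combined with the definition $\tan\alpha=\|\Delta W_i^{\text{off}}\|_F/\|\Delta W_i^{\text{on}}\|_F$ and the convention $\alpha\in[0,\pi/2)$, elementary right‑triangle trigonometry yields $\cos\alpha=\|\Delta W_i^{\text{on}}\|_F/\|\Delta W_i\|_F$ and $\sin\alpha=\|\Delta W_i^{\text{off}}\|_F/\|\Delta W_i\|_F$. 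Substituting the definitions of $\hat U_{\text{on}}$ and $\hat U_{\text{off}}$ then collapses $\|\Delta W_i\|_F\cos\alpha\,\hat U_{\text{on}}$ to exactly $\Delta W_i^{\text{on}}$ and $\|\Delta W_i\|_F\sin\alpha\,\hat U_{\text{off}}$ to exactly $\Delta W_i^{\text{off}}$, whose sum is $\Delta W_i$; this is the exact identity. I would dispose of the degenerate cases $\|\Delta W_i^{\text{on}}\|_F=0$ or $\|\Delta W_i^{\text{off}}\|_F=0$ separately, since there one of the unit directions is undefined but the claim is trivial (and $\alpha\in\{0,\pi/2\}$).

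For the small–angle statement I would Taylor–expand $\cos\alpha=1+O(\alpha^2)$, $\sin\alpha=\alpha+O(\alpha^3)$, and $\|\Delta W_i\|_F=\|\Delta W_i^{\text{on}}\|_F\sec\alpha=\|\Delta W_i^{\text{on}}\|_F\bigl(1+O(\alpha^2)\bigr)$, plug these into the exact identity, discard the $O(\alpha^2)\,\|\Delta W_i^{\text{on}}\|_F$ Frobenius‑norm remainder, and reinstate the suppressed factor $R_\text{global}$ to obtain the displayed approximation. The closing interpretive sentence then follows by reading off that $\cos\alpha\,\hat U_{\text{on}}+\sin\alpha\,\hat U_{\text{off}}$ is the image of $\hat U_{\text{on}}$ under a rotation by angle $\alpha$ in the plane $\mathrm{span}\{\hat U_{\text{on}},\hat U_{\text{off}}\}$, so a small $\alpha$ corresponds to a controlled off‑axis tilt, matching the behavior described for shared random‑basis PEFT.

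I expect no real mathematical obstacle — the argument is a single Pythagorean computation dressed in trigonometric notation. The only points that genuinely need care are bookkeeping rather than proof: (a) fixing a consistent convention for whether the trailing factor $R_\text{global}$ is included in the component matrices, since the statement is mildly ambiguous and the Frobenius identities rely on that factor being orthonormal (consistent with the paper's usage); (b) treating the zero‑norm degenerate cases explicitly; and (c) quantifying the ``$\approx$'' as an $O(\alpha^2)\,\|\Delta W_i^{\text{on}}\|_F$ error in Frobenius norm rather than leaving it informal.
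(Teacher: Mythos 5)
Your proposal is correct and follows essentially the same route as the paper's proof: a Pythagorean decomposition powered by Corollary~\ref{cor:orthogonality}, trigonometric relabeling of the two orthogonal components, and a Taylor expansion of $\cos\alpha$, $\sin\alpha$, and $\sec\alpha$ for the small-angle form. Where you go further is in actually \emph{deriving} the exact identity from $\cos\alpha=\|\Delta W_i^{\text{on}}\|_F/\|\Delta W_i\|_F$ and $\sin\alpha=\|\Delta W_i^{\text{off}}\|_F/\|\Delta W_i\|_F$ — the paper simply asserts the exact angular decomposition without justification — and in flagging the two bookkeeping hazards: the degenerate zero-norm cases, and the fact that the trailing $R_\text{global}$ (upper triangular, not orthogonal) is treated as norm-preserving even though strictly it isn't, an inconsistency the paper inherits from its statement of $\Delta W_i^{\text{on}},\Delta W_i^{\text{off}}$ without the $R_\text{global}$ factor while $\Delta W_i$ keeps it. These additions make your write-up more rigorous than the original, but the underlying argument is the same one-line trigonometric computation.
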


Detailed proofs are provided in the Appendix C.

\subsection{Memory Usage of Projection Matrices}

In FRoD, the factorized form Eq. 8 uses frozen matrices $U_i\in\mathbb{R}^{m\times n}$ and
shared $V\in\mathbb{R}^{n\times n}$, while only $\Sigma_i\in\mathbb{R}^n$ and sparse $S_i\in\mathbb{R}^{n\times n}$ are trainable. Across $L$ layers, the total number of trainable parameters is $L(mn+3sn^2 +n) + n^2$.
For comparison, LoRA with rank $r$ has $ L(mn + 3mr + 3nr)$ and VeRA shares fixed random bases $A\in\mathbb{R}^{m\times r}$, $B\in\mathbb{R}^{r\times n}$ across layers
and trains only scaling vectors $d\in\mathbb{R}^r, b \in \mathbb{R}^n$, yielding $ L(mn + 3r + 3n) + mr + nr $. A comparison between FRoD, LoRA and VeRA is shown in \tablename~\ref{tab:mem-comparison}.

\begin{table*}[t]
\centering

\begin{tabular}{lcccccccccc}
\toprule
Method & \# Params(\%) & BoolQ & PIQA & SIQA & HellaSwag & WinoGrande & OBQA & Average \\
\midrule
ChatGPT $\dagger$ & / & 73.10 & 85.40 & 68.50 & 78.50 & 66.10 & 74.80 & 77.01 \\
Full FT & 100 & 88.50 & 83.79 & 81.16 & 92.50 & 82.95 & 78.60 & 84.58 \\
LoRA$\dagger$ & 0.84 & 69.80 & 79.90 & 70.50 & 83.60 & 82.60  & 81.00 & 77.61 \\

\midrule
PiSSA$\dagger$ & 0.84 & 67.60 & 78.00 & 71.80 & 78.00 & 75.80  & 75.60 & 73.78 \\
MiLoRA$\dagger$ & 0.84 & 67.60 & 83.30 & 73.40 & 88.20 & 83.00  & 80.80 & 79.24 \\
GOAT $\dagger$ & 0.96 & 73.60 & \textbf{83.95} & 80.55 & 80.50 & \textbf{85.00} & \textbf{87.00} & 82.73 \\
\midrule
VeRA & 0.02 & 77.68 & 69.58 & 84.73 & 87.99 & 50.00 & 30.20 & 66.70 \\
RandLoRA & 0.84 & 87.65 & 82.70 & 88.05 & 92.91 & 75.77 & 34.20 & 76.88 \\
DoRA$\dagger$ & 0.84 & 71.80 & 83.10 & 79.90 & 89.10 & 83.00  & 81.20 & 80.45 \\
\midrule
FRoD(s=0.02) & 0.18 & \textbf{87.74} & 83.62 & \textbf{81.17} & \textbf{93.28} & 80.98 & 74.40 & \textbf{83.53} \\
\bottomrule
\end{tabular}
\caption{Performance comparison of LLaMA2 7B on 6 commonsense reasoning datasets. The symbol $\dagger$ indicates that the results are taken from \cite{fan_make_2025}.}
\label{table:CR}

\end{table*}

\begin{table*}[h]
\centering
\begin{tabular}{lccccccccc}
\toprule
Method & \# Params (\%) & CoLA & SST-2 & MRPC & QQP & MNLI & QNLI & RTE & Average \\
\midrule
Full FT & 100 & 84.27 & 96.44 & 90.44 & 91.58 & 90.84 & 94.58 & 84.84 & 90.42 \\
LoRA $\dagger$ & 4.00 & 83.41 & 95.64 & 83.33 & 90.06 & 89.00 & 93.28 & 84.47 & 88.46 \\
\midrule
PiSSA $\dagger$ & 4.00 & 69.12 & 95.98 & 82.84 & 91.24 & 88.94 & 93.59 & 73.29 & 85.00 \\
MiLoRA $\dagger$ & 4.00 & 84.65 & 96.10 & 86.02 & 91.33 & 89.51 & 94.12 & 84.83 & 89.51 \\
GOAT $\dagger$ & 4.50 & \textbf{86.86} & 96.21 & 84.55 & 91.40 & 89.55 & 94.19 & \textbf{85.56} & 89.76 \\
\midrule
VeRA & 0.22 & 81.78 & 95.76 & 86.59 & 89.30 & 89.48 & 94.39 & 75.09 & 87.48\\
RandLoRA & 0.84 & 84.66 & 96.10 & 89.71 & 90.24 & \textbf{89.58} & 94.25 & 80.14 & 89.24\\
DoRA $\dagger$ & 4.00 & 85.33 & 95.99 & 84.07 & 91.24 & 89.52 & 93.54 & 84.48 & 89.17 \\
\midrule
FRoD(s=0.02) & 2.49 & \textbf{86.86} & \textbf{96.33} & \textbf{90.44} & \textbf{91.63} & \textbf{89.58} & \textbf{94.40} & 84.48 & \textbf{90.53} \\
\bottomrule
\end{tabular}
\caption{Performance comparison of RoBERTa-large with different methods on 7 GLUE tasks. The symbol $\dagger$ indicates that the results are taken from \cite{fan_make_2025}.}
\label{table:NLU}

\end{table*}

\section{Experiments}

\subsection{Baselines}

We compare FRoD with a wide range of strong baselines, grouped into four categories, to comprehensively evaluate its effectiveness and robustness.
\textbf{Basic Mehtods}: \textit{Full FT} - fine-tunes all parameters; \textit{LoRA} \cite{hu_lora_2021}; \textit{ChatGPT} \cite{brown_language_2020}.
\textbf{SVD-Based LoRA Methods}:  \textit{PiSSA} \cite{meng_pissa_2024}; \textit{MiLoRA} \cite{wang_milora_2025}; \textit{GOAT} \cite{fan_make_2025}. 
\textbf{Shared Random-Basis PEFT Methods}: \textit{VeRA} \cite{kopiczko_vera_2023}; \textit{RandLoRA} \cite{albert_randlora_2025}. 
\textbf{Architecture-based Methods}: \textit{DoRA} \cite{liu_dora_2024}.


\subsection{Datasets}

We evaluate FRoD across 20 tasks, spanning 3 domains.
\textbf{Image Classification (IC)}: We fine-tune and evaluate Clip ViT-B/32 \cite{radford_learning_2021} on 7 image classification datasets. 
\textbf{Commonsense Reasoning (CR)}: We fine-tune LLaMA2-7B \cite{touvron_llama_2023} on Commonsense170K and evaluate on 6 commonsense reasoning datasets \cite{hu_llm-adapters_2023}. 
\textbf{Natural Language Understanding (NLU)}: We fine-tune RoBERTa-large \cite{liu_roberta_2019} on 7 GLUE tasks \cite{wang_glue_2019}, following the setup of \cite{hu_lora_2021}.  


\subsection{Main Results}

Tables 2, 3 and 4 present results on 3 domain benchmarks:
\begin{itemize}
    \item \textbf{Image classification (\tablename~\ref{table:IC}).} FRoD attains the best \emph{average} accuracy across seven datasets in just three epochs, edging out full fine‑tuning by 0.4 accuacy with only 2.5\% trainable parameters. GOAT outperforms FRoD on two individual datasets, a gap we attribute to its dataset‑specific rank tuning for a detailed comparison.
    \item \textbf{Commonsense reasoning (\tablename~\ref{table:CR}).} Within four epochs, FRoD delivers decisive gains over full FT on BoolQ, PIQA, SIQA and HellaSwag while maintaining a 500× parameter reduction. Lower scores on WinoGrande and OBQA stem from limited training examples and prompt-template designing.
    \item \textbf{Natural language understanding (\tablename~\ref{table:NLU}).} Running for only three epochs, FRoD matches or surpasses full FT on five GLUE tasks. The single shortfall on RTE is largely due to the tight epoch budget, extending training to ten epochs recovers parity.
\end{itemize}

In summary, FRoD consistently delivers leading performance across all benchmarks, excels in nearly every sub-task, and narrows the remaining gap to full fine-tuning and, in some cases, even eliminates it, thereby attesting to the heightened efficacy of our method. All metrics are averaged over three independent training runs on each dataset. Further prompt-template details and analyses are presented in Appendix D.

\subsection{Convergence Speed}


FRoD exhibits markedly faster convergence across the main experiments. It reaches convergence within 4 epochs on most IC benchmarks and within a single epoch on SVHN, whereas the strongest LoRA-based baseline, GOAT, requires more than ten epochs. On CR and NLU tasks, FRoD achieves peak validation accuracy in 3 epochs, except for RTE, which stabilizes after ten, whereas competing methods typically require between three and one hundred epochs. The complete set of training hyper-parameters is provided in the Appendix D.

\begin{figure}[t]
\centering
\includegraphics[width=\linewidth]{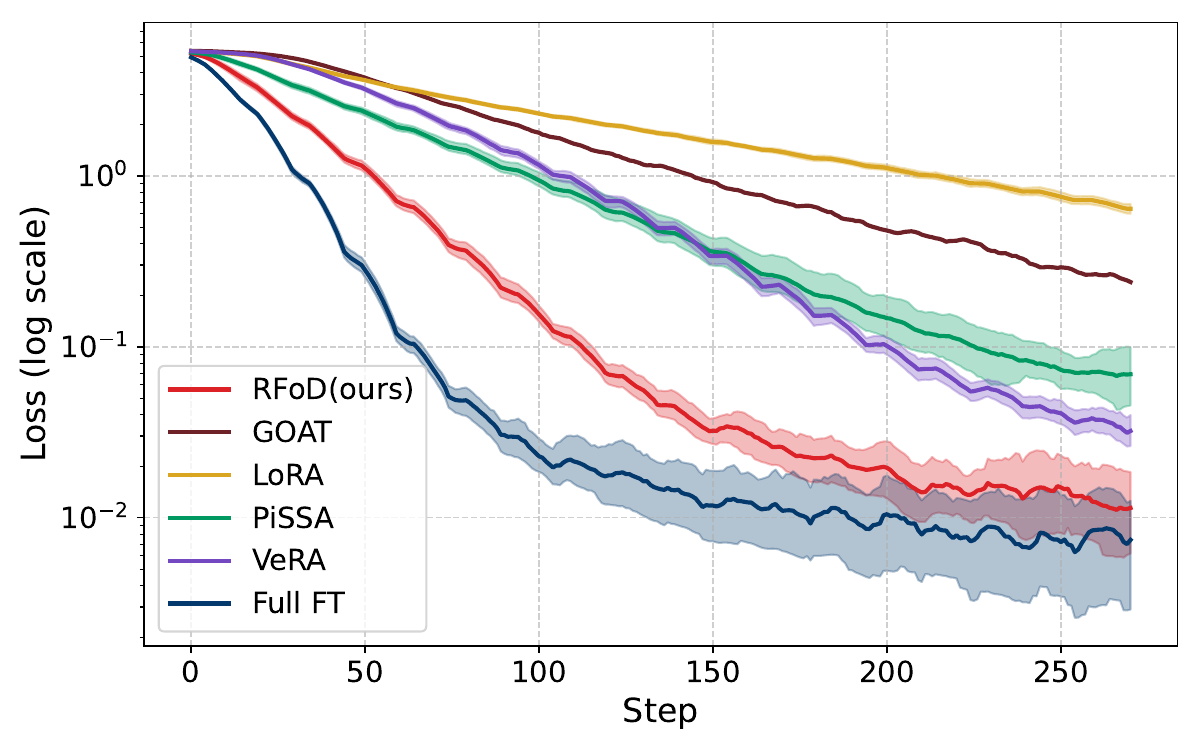}
\caption{Training loss curves of Different LoRA methods and Full Fine-tuning on Cars. The shaded areas in the figure represent the error bounds of different methods. We randomly sample five seed values. }
\label{fig:converage}
\end{figure}

\figurename~\ref{fig:converage} illustrates the epoch-wise validation accuracy on Stanford Cars for FRoD under five different random seeds. The solid curve denotes the mean accuracy across seeds, while the shaded region represents the min–max range of those runs. Across these runs, FRoD’s final accuracy remains within $0.1\%$ of full fine-tuning, demonstrating that our convergence speed and performance are stable and insensitive to initialization variance.


\subsection{Analysis of Loss Landscape Geometry}


\begin{figure}[t]
\centering
\includegraphics[width=\linewidth]{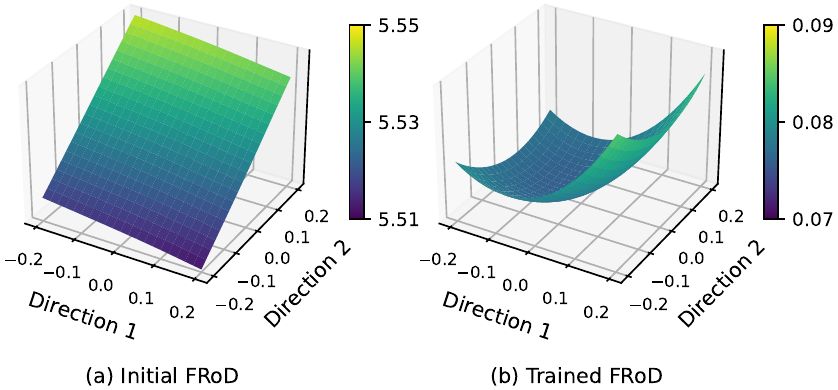}
\caption{Visualization of Loss Landscapes for FRoD in Model Parameter Space.}
\label{fig:frod_landscape}
\end{figure}

\begin{figure*}[t]
\centering
\includegraphics[width=\linewidth]{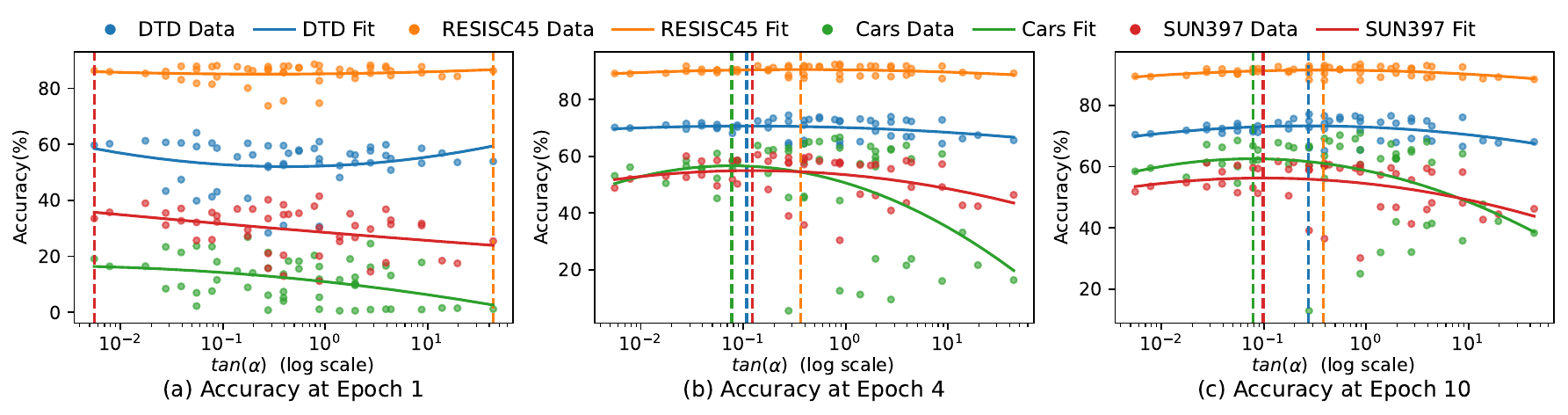}
\caption{Impact of off-axis rotation angle $\alpha$ on classification accuracy.  
Scatter plots and fitted trends illustrate how varying the rotation angle and its magnitude affect accuracy at epochs 1, 4, and 10 on four vision datasets.}
\label{fig:frod_ablation}

\end{figure*}



To elucidate why FRoD achieves superior convergence, we visualize its loss landscape in \figurename~\ref{fig:frod_landscape}. As quantified in the Appendix D.5, our hierarchical joint decomposition reconstructs each layer with a maximum error below 1e-5, effectively preserving the complete generalized singular-value spectrum. This initialization results in a smooth and isotropic initial loss surface (\figurename~\ref{fig:frod_landscape}a). In contrast, PiSSA’s reliance on dominant components leads to a more fragmented initial geometry (\figurename~\ref{fig:motivation}c), while VeRA exhibits higher initial curvature (\figurename~\ref{fig:motivation}d).

After optimization, FRoD (\figurename~\ref{fig:frod_landscape}b) maintains a well-conditioned, convex landscape that closely resembles full fine-tuning (\figurename~\ref{fig:motivation}e), despite using significantly fewer trainable parameters. This stability confirms that the rotational degrees of freedom in FRoD provide sufficient flexibility for full-rank updates without inducing the pronounced curvature distortion seen in VeRA (\figurename~\ref{fig:motivation}h). Ultimately, these geometric advantages allow FRoD to support aggressive learning rates and achieve faster, more robust convergence across diverse tasks. Further experiments on additional initialization methods are provided in Appendix D.6.

\begin{table}[t]
\centering
\begin{tabular}{c c c | c c c}
\toprule
\multicolumn{3}{c|}{\textbf{Hyperparameters}} & \multicolumn{3}{c}{\textbf{Avg.\ Acc.(Epoch) (\%)}} \\
\cmidrule(lr){1-3}\cmidrule(lr){4-6}
$s$ & $\text{lr}(S)$ & $\text{lr}(\Sigma_i)$ & 1 & 4 & 10 \\
\midrule

$0.0$ & 0 & 1e-04 & 25.89 & 55.89 & 70.77* \\
$0.0$ & 0 & 5e-04 & 34.20 & 66.52 & \textbf{71.26} \\
$0.0$ & 0 & 1e-03 & 42.13 & \textbf{68.58} & 70.86 \\
$0.0$ & 0 & 5e-03 & \textbf{50.03} & 65.66 & 67.28 \\
\midrule
$0.01$ & 1e-04 & 0 & 47.72 & \textbf{71.33} & 72.88 \\
$0.02$ & 1e-05 & 0 & 29.96 & 62.47 & 70.77 \\
$0.02$ & 5e-05 & 0 & 44.51 & 68.87 & 70.92 \\
$0.02$ & 1e-04 & 0 & \textbf{49.62} & 70.89 & \textbf{73.12} \\
$0.02$ & 5e-04 & 0 & 41.59 & 55.28 & 60.60 \\
$0.1$ & 1e-04 & 0 & 46.95 & 70.68 & 72.77 \\
\midrule
$0.1$ & 5e-04 & 5e-03 & 37.47 & 48.51 & 52.22 \\
$0.02$ & 1e-04 & 1e-03 & 48.36 & \textbf{71.85} & \textbf{74.59} \\
\bottomrule
\end{tabular}
\caption{Hyperparameter sensitivity of FRoD on four vision datasets. 
The table reports average accuracy (\%) at epochs 1, 4, and 10 for combinations of sparsity $s \in \{ 0.01, 0.02, 0.1\}$, off-axis learning rate $lr(S) \in \{\text{1e-5}, \text{5e-5}, \text{1e-4}, \text{5e-4}\}$, and on-axis learning rate $lr(\Sigma_i) \in \{ \text{1e-4}, \text{5e-4}, \text{1e-3}, \text{5e-3}\}$.  Rows are shaded to indicate the best (pink), second best (light pink), and worst (light gray) hyperparameter settings. An asterisk (*) denotes configurations that fail to converge on most datasets within 10 epochs.}
\label{tab:ablation_study}
\end{table}

\subsection{Ablation Study}

To better understand the roles of on-axis and off-axis updates, we conduct a detailed ablation study, separately disabling each component in FRoD. 

\noindent\textbf{Impact of On-axis.} When removing the off-axis updates, we observe that increasing the learning rate for the dominant singular directions ($\Sigma$) significantly accelerates early-stage convergence—as reflected in high accuracy at epoch 1, as \tablename~\ref{tab:ablation_study}. However, overly large values introduce instability and lead to suboptimal final performance by epoch 10. This reveals a clear trade-off between convergence speed and stability for the on-axis learning rate.

\noindent\textbf{Impact of Off-axis.} In contrast, when removing the on-axis component, the off-axis updates alone offer higher training stability due to their broader parameter search space. Yet, large off-axis learning rates do not guarantee better performance, as these directions are not strictly orthogonal and can easily distort the update. Consequently, smaller learning rates are preferred for off-axis updates to preserve stability.

\noindent\textbf{Impact of learning rate.} Importantly, none of the ablated configurations outperforms the jointly activated setting, which combines both components and achieves the highest accuracy. 
Nonetheless, the overall performance remains sensitive to different learning rate combinations. Some poorly tuned configurations even underperform the ablated variants.

To investigate this further, we approximate, using learning rate, the rotation angle $\alpha$ between the on-axis and off-axis components, defined by Eq. 14.
As shown in Figure 6, Early-stage performance (epoch 1) benefits from more aggressive updates in either axis. However, the best final accuracy (epoch 10) emerges when $\tan \alpha \in [0.05, 0.2]$, validating our theoretical insight that moderate rotation preserves strength alignment as Eq.17. Appendix E provides implementation details.
This analysis confirms that well-calibrated rotational degrees of freedom not only enhance early convergence but also maintain representation stability throughout training.

\section{Conclusion}

We present FRoD, a novel PEFT approach that synergistically integrates hierarchical orthogonal decomposition and sparse perturbations to achieve both rapid convergence and high expressiveness. 
By extracting shared latent bases across layers and injecting rotational degrees of freedom via sparse matrices, FRoD substantially expands the effective update space at minimal parameter cost. 
Empirical results on 20 tasks across vision, reasoning, and language understanding demonstrate that 
FRoD consistently achieves or surpasses full fine-tuning and leading PEFT baselines, converging within only 1--4 epochs and exhibiting strong robustness to random initialization. 
These findings highlight FRoD’s capability to reconcile parameter efficiency with adaptable optimization dynamics. 
As future work, we will explore leveraging the high-rank sparse subspaces induced by FRoD for continual learning and model merging, potentially enabling more efficient knowledge integration and mitigating catastrophic forgetting.

\section{Acknowledgements}

This work was supported by the National Natural Science Foundation of China (62302022, 62202029), and Young Elite Scientists Sponsorship Program by CAST (NO.20230NRC001).


\putbib[reference]
\end{bibunit}
\onecolumn
\begin{bibunit}[aaai2026]
\begin{center}
    {\LARGE \textbf{Appendix}}\\[2mm]
    \vspace{0.5em}
\end{center}

\vspace{1em}

\section*{A. Convergence Analysis}
Adam\cite{kingma_adam_2017} has become the most widely used first-order optimizer in modern deep-learning practice. Rather than explicitly computing the Hessian matrix\cite{Nocedal_optimiztion_2006}, it adapts the learning rate in a sophisticated, data-driven way, thereby partially emulating the behavior of second-order optimization methods. The second-order Taylor expansion of the loss $L(\theta)$ around initial parameters $\theta_0$ is:
\begin{equation}
L(\theta) \approx L(\theta_0) + \nabla L(\theta_0)^\top (\theta - \theta_0) + \frac{1}{2} (\theta - \theta_0)^\top H(\theta_0) (\theta - \theta_0),
\end{equation}
where$ \Delta L(\theta_0) $represents the first-order moment and $H(\theta_0) = \frac{\partial^2 L}{\partial \theta \partial \theta^\top}$ is the Hessian matrix: 
\begin{equation}
    H(\theta_0) = \begin{bmatrix}
\frac{\partial^2 L}{\partial \text{vec}(A) \partial \text{vec}(A)^\top} & \frac{\partial^2 L}{\partial \text{vec}(A) \partial \text{vec}(B)^\top} \\
\frac{\partial^2 L}{\partial \text{vec}(B) \partial \text{vec}(A)^\top} & \frac{\partial^2 L}{\partial \text{vec}(B) \partial \text{vec}(B)^\top}
\end{bmatrix}.
\end{equation}

Existing studies on methods\cite{meng_pissa_2024}, which initialize LoRA matrices from singular-value components, have focused almost exclusively on first-order statistics. Because a zero-initialized LoRA matrix starts with extremely small values, those works propose a more principled approach that initializes the LoRA matrix using the factor matrices obtained from singular-value decomposition. However, the Hessian matrix also influences the convergence behavior of these Parameter-Efficienct Fine-Tuning (PEFT) methods.
Analyzing the Hessian matrix enables us to characterize the gradient-descent dynamics of Full Fine-Tuning (Full FT), LoRA\cite{hu_lora_2021}, PiSSA\cite{meng_pissa_2024}, and VeRA\cite{meng_pissa_2024}, thereby comparing the convergence rates of these methods. 
Some previous work \cite{zhang_why_2024, tabeart_new_2022} analyze the performance of gradient-descent algorithms by examining the condition number $\tau$ of the Hessian matrix H:
\begin{equation}
    \tau = \frac{\max\|H\|}{\min \|H\|},
\end{equation}
where the norm $\|\|$ is typically taken to be the 2-norm, the ratio between the largest and smallest eigenvalues of $H$. However, under a low-rank adapter setting, the effective rank is restricted to $r \ll \min \{n,m\}$, implying that the smallest eigenvalue of $H$  is zero; consequently, the condition number $\tau$ becomes unbounded. To circumvent this problem, we add a small ridge term $\epsilon I$ to the Hessian and analyze the resulting optimization dynamics through the regularized condition number
\begin{equation}
    \dot{\tau} \triangleq \frac{\lambda_{\max}(H+\epsilon I)}{\lambda_{\min}(H+\epsilon I)},
\end{equation}
which remains finite for any $\epsilon>0$.



\subsection*{A.1. Hessian Analysis of LoRA}

\textbf{Parameterization}: LoRA gets $W = W_0 +  AB$, with $A \in \mathbb{R}^{m \times r}$, $B \in \mathbb{R}^{r \times n}$, where $\theta = [\text{vec}(A); \text{vec}(B)]$.

\textbf{Gradient}:
\begin{equation}
\frac{\partial L}{\partial A_{ij}} =  \sum_l \frac{\partial L}{\partial W_{\text{adapted},il}} B_{jl}, \quad \frac{\partial L}{\partial B_{ij}} =  \sum_k A_{ki} \frac{\partial L}{\partial W_{\text{adapted},kj}},
\end{equation}
\begin{equation}
\nabla L(\theta_0) = \begin{bmatrix}
\text{vec}\left(  \left( \frac{\partial L}{\partial W_{\text{adapted}}} \right) B_0^\top \right) \\
\text{vec}\left(  A_0^\top \left( \frac{\partial L}{\partial W_{\text{adapted}}} \right) \right)
\end{bmatrix}.
\end{equation}

\textbf{Hessian}:
\begin{equation}
\frac{\partial^2 L}{\partial A_{ij} \partial A_{pq}} = ^2 \sum_{l,t} \frac{\partial^2 L}{\partial W_{\text{adapted},il} \partial W_{\text{adapted},pt}} B_{jl} B_{qt} = ^2 (B B^\top)_{jl} \delta_{ip} H_W,
\end{equation}
\begin{equation}
\frac{\partial^2 L}{\partial \text{vec}(A) \partial \text{vec}(A)^\top} = ^2 (B B^\top \otimes H_W),
\end{equation}
\begin{equation}
\frac{\partial^2 L}{\partial B_{ij} \partial B_{pq}} = ^2 \sum_{k,s} A_{ki} A_{sp} \frac{\partial^2 L}{\partial W_{\text{adapted},kj} \partial W_{\text{adapted},sq}},
\end{equation}
\begin{equation}
\frac{\partial^2 L}{\partial \text{vec}(B) \partial \text{vec}(B)^\top} = ^2 (A^\top A \otimes H_W).
\end{equation}
Assuming $H_W \approx \lambda I_{mn}$:
\begin{equation}
H(\theta_0) \approx \lambda \begin{bmatrix}
\|B_0\|_F^2 I_{mr} & 0 \\
0 & \|A_0\|_F^2 I_{rn}
\end{bmatrix}.
\end{equation}
The condition number $\dot \tau$ is $(\max(\|B_0\|_F^2, \|A_0\|_F^2) + \epsilon) / (\epsilon)$.When the parameters are initialized with $A \sim \mathcal{N}(0,\sigma^{2})$ and $B = 0$, the regularized condition number
\begin{equation}
    \dot{\tau} \;=\; \frac{\epsilon + \|A_{0}\|{F}^{2}}{\epsilon}
\end{equation}
is close to 1, so the corresponding loss landscape is relatively flat (\figurename~\ref{fig:landscape}b). After some training, $\|A{0}\|{F}^{2}$ and $\|B{0}\|_{F}^{2}$ both increase, $\dot{\tau}$ grows larger, and the landscape becomes much steeper (\figurename~\ref{fig:landscape}f).

\subsection*{A.2. Hessian Analysis of PiSSA}

\textbf{Parameterization}: Similar to LoRA, PiSSA parameterizes the weight matrix as $W = W_0 + AB$, where $A_0 = U_{[:, :r]} S_{[:r, :r]}^{1/2}$ and $B_0 = S_{[:r, :r]}^{1/2} V_{[:, :r]}^\top$.

\textbf{Gradient}:
\begin{equation}
\nabla L(\theta_0) = \begin{bmatrix}
\text{vec}\left( \left( \frac{\partial L}{\partial W_{\text{adapted}}} \right) B_0^\top \right) \\
\text{vec}\left( A_0^\top \left( \frac{\partial L}{\partial W_{\text{adapted}}} \right) \right)
\end{bmatrix}.
\end{equation}

\textbf{Hessian}: Identical to LoRA:
\begin{equation}
H(\theta_0) \approx \lambda \left( \sum_{i=1}^r \sigma_i \right) \begin{bmatrix}
I_{mr} & 0 \\
0 & I_{rn}
\end{bmatrix}.
\end{equation}
Here, $\sigma_i$ denotes the strength (magnitude) of the dominant singular values that appear on the diagonal of the matrix $S_{[:r,:r]}$. For PiSSA, the initial condition number
\begin{equation}
    \dot{\tau} \;=\; \frac{\sum_{i=1}^{r}\sigma_{i} + \epsilon}{\epsilon},
\end{equation}
implies that the regularized Hessian is already highly ill-conditioned at initialization, which makes the loss landscape in \figurename~\ref{fig:landscape}c extremely rugged.
As training proceeds, however, $\dot{\tau}$ gradually decreases (see \figurename~\ref{fig:landscape}g), and in the ideal case it reaches the same value as LoRA once both methods converge to the same optimum.

\begin{figure}[t]
\centering
\includegraphics[width=\linewidth]{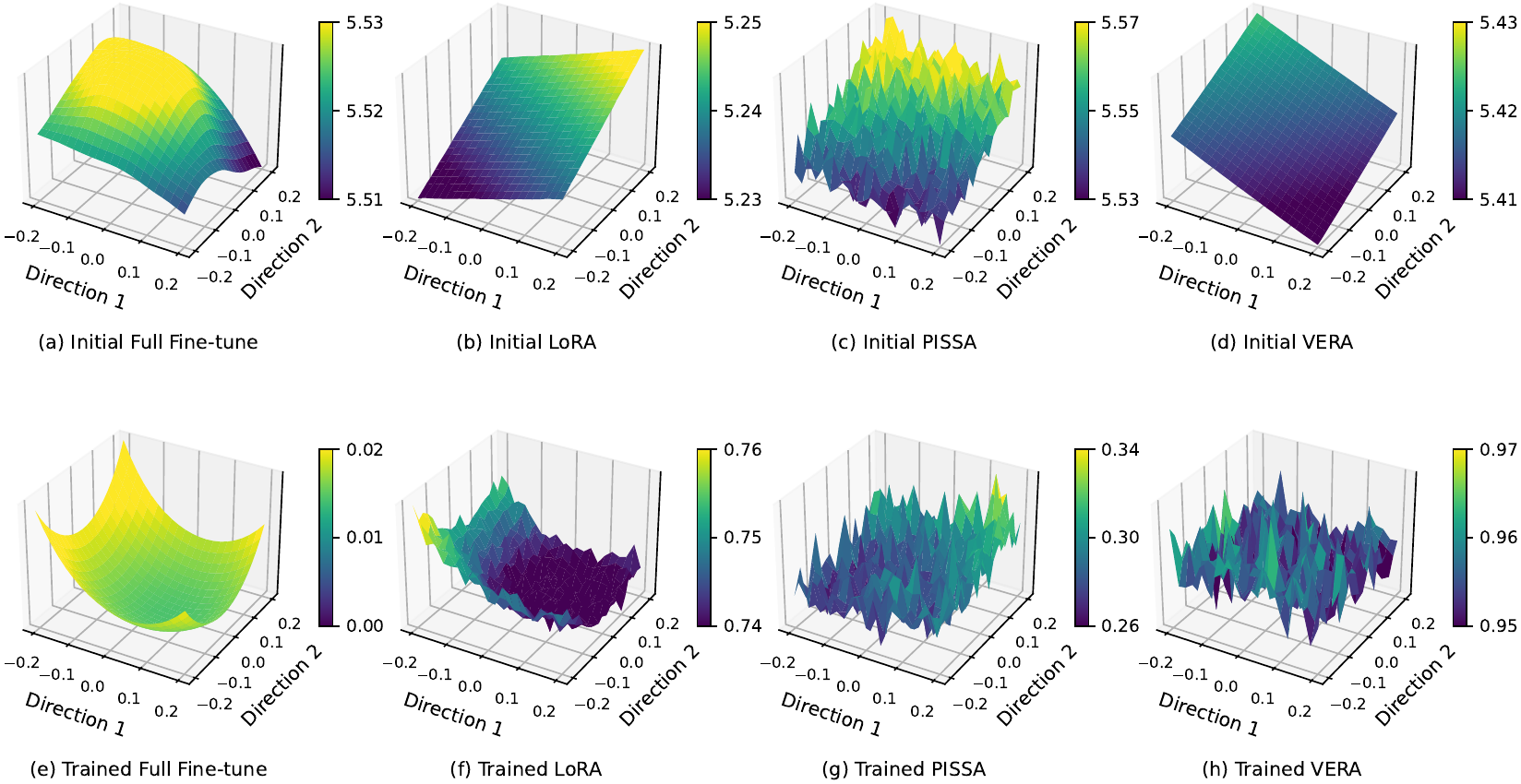}
\caption{Comparison of loss landscapes for four representative fine-tuning methods in a principal parameter subspace.}
\label{fig:landscape}
\end{figure}

\subsection*{A.3. Hessian Analysis of VeRA}

\textbf{Parameterization}: VeRA initailizes 2 shared random-basis, $W = W_0 + \Lambda_b B \Lambda_d A$, with $\Lambda_b = \text{diag}(b)$, $\Lambda_d = \text{diag}(d)$, $b, d \in \mathbb{R}^r$, and $\theta = [b; d]$.

\textbf{Gradient}:
\begin{equation}
\frac{\partial L}{\partial b_k} = d_k B_{kk} \langle \frac{\partial L}{\partial W_{\text{adapted}}}, A_{:k} \rangle, \quad \frac{\partial L}{\partial d_k} = b_k B_{kk} \langle \frac{\partial L}{\partial W_{\text{adapted}}}, A_{:k} \rangle.
\end{equation}

\textbf{Hessian}:
\begin{equation}
\frac{\partial^2 L}{\partial b_k \partial b_p} = \lambda d_k d_p B_{kk} B_{pp} (A^\top A)_{kp}, \quad \frac{\partial^2 L}{\partial d_k \partial d_p} = \lambda b_k b_p B_{kk} B_{pp} (A^\top A)_{kp},
\end{equation}
\begin{equation}
\frac{\partial^2 L}{\partial b_k \partial d_p} \approx \lambda B_{kk} (A^\top \frac{\partial L}{\partial W_{\text{adapted}}})_{kp} \delta_{kp}.
\end{equation}
The Hessian is approximately:
\begin{equation}
H(\theta_0) \approx \lambda \begin{bmatrix}
\text{diag}(d_0^2 \circ B_{0,\text{diag}}^2 \circ \text{diag}(A_0^\top A_0)) & \text{diag}(B_{0,\text{diag}} \circ \text{diag}(A_0^\top \frac{\partial L}{\partial W_{\text{adapted}}})) \\
\text{diag}(B_{0,\text{diag}} \circ \text{diag}(A_0^\top \frac{\partial L}{\partial W_{\text{adapted}}})) & \text{diag}(b_0^2 \circ B_{0,\text{diag}}^2 \circ \text{diag}(A_0^\top A_0))
\end{bmatrix}.
\end{equation}
For VeRA, a comparatively high-rank fine-tuning scheme, the regularized Hessian exhibits the same well-behaved conditioning at initialization as in LoRA, especially because the b vector is initialized to zero. Consequently, VeRA’s initial regularized condition number is close to 1, indicating an almost planar loss landscape, as \figurename~\ref{fig:landscape}d. This near-isotropy allows VeRA to operate with an unusually large learning rate (e.g. 1e-2) in practical fine-tuning scenarios, a rate that even full-parameter fine-tuning typically cannot sustain.

\subsection*{A.4. Brife Summary}

A Hessian-based comparison of LoRA, PiSSA, and VeRA reveals that convergence speed depends on two competing factors: the magnitude of first-order gradients and the spectral condition number. LoRA, initialized with $B_0 = 0$, enjoys a regularized condition number $\dot{\tau}!\approx!1$ and an almost isotropic landscape, yet its vanishing first-order gradient moments slow early‐stage optimization. PiSSA amplifies these moments by seeding the adapter with SVD factors, but the direct injection of large singular values into a low-rank module inflates the initial $\dot{\tau}$, ruggedizes the landscape, and necessitates conservative learning rates. VeRA preserves a LoRA-like flat landscape through shared diagonal bases $\Lambda_b$ and $\Lambda_d$, which permits learning rates as high as $10^{-2}$; however, its zero-initialized strength vectors $(b,d)$ compress the gradient magnitude to near zero. Addressing either “tiny gradients” or “spectral ill-conditioning” in isolation inevitably compromises the other, highlighting the need for a structural compromise.

We therefore introduce Hierarchical Joint Decomposition (HJD). The method retains VeRA’s layer-agnostic random bases to keep the global Hessian nearly isotropic, but performs a joint factorization of higher-order statistics—such as variance and block-diagonal second-moment tensors—across all layers, and assigns each layer a data-aware (non-zero) strength vector. Consequently, HJD provides both realistically scaled gradients and a flat landscape ($\dot{\tau}!\approx!1$), thereby enabling aggressive learning rates for rapid convergence while sidestepping PiSSA’s initial spectral imbalance and LoRA’s gradient starvation. Its global–local, two-level coupling further allows each layer to adapt its singular-value spectrum during fine-tuning without violating overall isotropy, achieving parameter efficiency and stable optimization simultaneously.

\section*{B. Proof of Parameter Degree of Freedom}

We characterize each Parameter-Efficient Fine-Tuning (PEFT) method by the \emph{update subspace} $\mathbb{U} \subset \mathbb{R}^{m \times n}$ it can explore during training. A wider subspace affords greater expressivity, whereas a narrow one restricts adaptation.

Let $\mathbb{R}^{m \times n}$ denote the space of $m \times n$ real matrices, representing the weight matrices of a neural network layer. For a PEFT method, the update subspace $\mathbb{U} \subset \mathbb{R}^{m \times n}$ defines the set of possible update matrices that can be applied to the weight matrix during training. For the Low-Rank Adaptation (LoRA) and Vector-based Rank Adaptation (VeRA) methods, the update subspaces and their dimensions are characterized as follows:

\begin{enumerate}
    \item \textbf{LoRA Update Subspace}:
    \begin{equation}
    \mathbb{U}_{\text{LoRA}} = \left\{ BA \mid B \in \mathbb{R}^{m \times r}, A \in \mathbb{R}^{r \times n} \right\},
    \end{equation}
    with dimension:
    \begin{equation}
    \dim(\mathbb{U}_{\text{LoRA}}) = r(m + n - r).
    \end{equation}
    \item \textbf{VeRA Update Subspace}:
    \begin{equation}
    \mathbb{U}_{\text{VeRA}} = \operatorname{span} \left\{ B_{:k} A_{:k}^\top \right\}_{k=1}^r,
    \end{equation}
    where $B_{:k} \in \mathbb{R}^m$ and $A_{:k} \in \mathbb{R}^n$ are the $k$-th columns of matrices $B \in \mathbb{R}^{m \times r}$ and $A \in \mathbb{R}^{n \times r}$, respectively, and with dimension:
    \begin{equation}
    \dim(\mathbb{U}_{\text{VeRA}}) = r + n.
    \end{equation}
\end{enumerate}
Here, $r$ is the rank parameter, satisfying $1 \leq r \leq \min(m, n)$.

\subsection*{B.1. Proof}

\subsubsection*{B.1.1. LoRA Update Subspace and Dimension}
\textbf{Update Subspace Characterization}:
For LoRA, the update matrix is of the form $BA$, where $B \in \mathbb{R}^{m \times r}$ and $A \in \mathbb{R}^{r \times n}$. Thus:
\begin{equation}
\mathbb{U}_{\text{LoRA}} = \left\{ BA \mid B \in \mathbb{R}^{m \times r}, A \in \mathbb{R}^{r \times n} \right\}.
\end{equation}

This set consists of all $m \times n$ matrices expressible as the product of a matrix $B$ of size $m \times r$ and a matrix $A$ of size $r \times n$. Since $BA \in \mathbb{R}^{m \times n}$, $\mathbb{U}_{\text{LoRA}}$ is a subset of $\mathbb{R}^{m \times n}$.

\textbf{Dimension Calculation}:
To compute $\dim(\mathbb{U}_{\text{LoRA}})$, we determine the number of free parameters in $BA$, accounting for redundancies.

\textbf{Parameter Count}: The matrix $B \in \mathbb{R}^{m \times r}$ has $m \cdot r$ entries, and $A \in \mathbb{R}^{r \times n}$ has $r \cdot n$ entries. The total number of parameters is:
\begin{equation}
m \cdot r + r \cdot n = r(m + n).
\end{equation}
    
\textbf{Redundancy}: The factorization $BA$ is not unique since for any invertible matrix $Q \in \mathbb{R}^{r \times r}$, we have:
\begin{equation}
BA = (B Q)(Q^{-1} A).
\end{equation}

The group of invertible matrices $GL(r, \mathbb{R})$ has dimension $r^2$, as an $r \times r$ matrix has $r^2$ entries, and invertibility imposes an open condition (non-zero determinant) that does not reduce the dimension.

\textbf{Effective Dimension}: The parameter space of $(B, A)$ is $\mathbb{R}^{m \times r} \times \mathbb{R}^{r \times n}$, with dimension $r(m + n)$. Subtracting the redundancy dimension $r^2$, the dimension of $\mathbb{U}_{\text{LoRA}}$ is:
\begin{equation}
r(m + n) - r^2 = r(m + n - r).
\end{equation}

Thus:
\begin{equation}
\dim(\mathbb{U}_{\text{LoRA}}) = r(m + n - r).
\end{equation}

\subsubsection*{B.1.2. VeRA Update Subspace and Dimension}
\textbf{Update Subspace Characterization}:
For VeRA, the update matrix is a linear combination of rank-1 matrices:
\begin{equation}
\Delta W =  \Lambda_b B \Lambda_d A = \sum_{k=1}^r \alpha_k B_{:k} d_k A_{:k}^\top,
\end{equation}
where $B_{:k} \in \mathbb{R}^m$ and $A_{:k} \in \mathbb{R}^n$ are the $k$-th columns of $B \in \mathbb{R}^{m \times r}$ and $A \in \mathbb{R}^{n \times r}$, respectively, and $\alpha_k \in \mathbb{R}$ are coefficients. Thus:
\begin{equation}
\mathbb{U}_{\text{VeRA}} = \operatorname{span} \left\{ B_{:k} A_{:k}^\top \right\}_{k=1}^r.
\end{equation}

\textbf{Dimension Calculation}:
To compute $\dim(\mathbb{U}_{\text{VeRA}})$, we analyze the span parameterized by $B$, $A$, and coefficients $\alpha_k$.

Assume $B \in \mathbb{R}^{m \times r}$ is fixed (a common setup in VeRA to reduce parameters), and we optimize over $A \in \mathbb{R}^{n \times r}$. The matrix $A$ has $n \cdot r$ entries. The update matrix is:
\begin{equation}
\sum_{k=1}^r \alpha_k B_{:k} d A_{:k}^\top.
\end{equation}

The matrix $B_{:k} A_{:k}^\top$ vectorizes to $A_{:k} \otimes B_{:k} \in \mathbb{R}^{m \cdot n}$. The subspace is:
\begin{equation}
\mathbb{U}_{\text{VeRA}} = \operatorname{span} \left\{ A_{:k} \otimes B_{:k} \right\}_{k=1}^r.
\end{equation}

\textbf{Dimension Analysis}: If $B_{:k}$ are fixed and linearly independent, and $A_{:k}$ are parameterized by magnitude vector $d$, we consider a simplified case where $d$ is a single vector $v \in \mathbb{R}^n$ scaled across $r$ terms, or $A$ has a constrained form. The provided dimension $r + n$ suggests:
    
\begin{itemize}
    \item A vector $d \in \mathbb{R}^n$ contributing $n$ parameters.
    \item Coefficients $\{\alpha_k\}_{k=1}^r$ contributing $r$ parameters.
\end{itemize}

Thus:
\begin{equation}
    \dim(\mathbb{U}_{\text{VeRA}}) = r + n.
\end{equation}

This assumes a parameterization where $A, B$ is fixed, and $d$ contributes $n$ parameters plus $r$ coefficients, matching the given dimension.

\subsection*{B.2. Brief Summary}

Parameter Degree of Freedom analysis exposes VeRA’s inherent weakness in its update space: its degrees of freedom, $\dim(\mathbb{U}_{\text{VeRA}})=r+n$, scale only linearly with LoRA, yet are orders of magnitude smaller than the $mn$ available to Full FT. This overly narrow subspace forces the gradient energy during late training to collapse into only a few dominant singular directions, leaving almost all others effectively frozen. Consequently, only a handful of large singular values persist; the Hessian spectrum becomes severely imbalanced, and VeRA exhibits the most rugged loss landscape among the four methods (see \figurename~\ref{fig:landscape}h), thereby hampering—if not destabilising—late-stage convergence.

To alleviate this bottleneck, we introduce sparse off-axis rotational degrees of freedom into the update space. Without altering the magnitude information carried by the shared diagonal matrix, we inject a small set of sparse rotational components that impart the shared matrix with a subtle yet crucial rotation angle. This extra freedom (i) expands the model’s expressive power and (ii) softens the Hessian spectrum by shrinking extreme eigenvalue gaps, thereby making the curvature distribution resemble the flatter profile of Full FT (see \figurename~\ref{fig:landscape}e). The result is a training process that retains fast convergence under large step sizes while simultaneously achieving greater stability during the later optimisation stages.

\section*{C. Proof of the FRoD Property}

The FRoD method is characterized by two fundamental components: a Hierarchical Joint Decomposition and an embedded rotational degree of freedom. In the following, we formally analyze these two components and derive the main theorems and corollaries that underpin the theoretical foundation of FRoD.

\subsection*{C.1. The Necessity of Hierarchical Joint Decomposition}

In parameter-efficient fine-tuning (PEFT), the introduction of shared projection matrices is critical for reducing memory overhead, especially when deploying models across multiple tasks or devices. Sharing a common basis across layers or tasks not only saves storage but also simplifies model merging and continual learning. Furthermore, from the perspective of matrix analysis, the orthogonality property is highly desirable due to its favorable numerical stability and algebraic tractability. In particular, orthogonal matrices preserve inner products and eigenvalue structure, making them an ideal choice for basis representations.

However, despite these advantages, we establish a key structural limitation in the form of the following non-existence result.

\begin{proposition}[Non-Existence for a Joint Decomposition with a Common Orthogonal Matrix]
Let ${A_1, A_2, \dots, A_k}$ be a set of real matrices. A joint factorization of the form
\begin{equation}
    A_i = U_i \Sigma_i V^T,
\end{equation}
where for each $i$, $U_i$ is an orthogonal matrix, $\Sigma_i$ is a diagonal matrix of singular values, and $V$ is a single \textbf{common} orthogonal matrix for all $i$, does not generally exist.
\end{proposition}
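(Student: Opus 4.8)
The plan is to reduce the question to the classical spectral fact that a family of real symmetric matrices is simultaneously orthogonally diagonalizable if and only if its members pairwise commute, and then to exhibit a small explicit family of the $A_i$ for which the relevant Gram matrices fail to commute. First I would take the hypothetical factorization $A_i = U_i \Sigma_i V^\top$ at face value and pass to Gram matrices. Using $U_i^\top U_i = I$,
\begin{equation}
A_i^\top A_i \;=\; V\,\Sigma_i^\top U_i^\top U_i \Sigma_i\, V^\top \;=\; V\,(\Sigma_i^\top \Sigma_i)\,V^\top ,
\end{equation}
and each $\Sigma_i^\top \Sigma_i$ is diagonal. Hence the \emph{single} orthogonal matrix $V$ would simultaneously diagonalize every symmetric PSD matrix $A_i^\top A_i$.

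The next step is to invoke the standard result: real symmetric matrices $M_1,\dots,M_k$ admit a common orthogonal diagonalizer if and only if $M_pM_q = M_qM_p$ for all $p,q$. Applying this with $M_i = A_i^\top A_i$, the existence of the common $V$ \emph{forces} $A_p^\top A_p$ and $A_q^\top A_q$ to commute for every pair $(p,q)$. So it suffices to produce matrices violating this constraint. I would take, for instance,
\begin{equation}
A_1 = \begin{pmatrix} 1 & 0 \\ 0 & 2 \end{pmatrix}, \qquad
A_2 = \begin{pmatrix} 1 & 1 \\ 0 & 1 \end{pmatrix},
\end{equation}
so that $A_1^\top A_1 = \mathrm{diag}(1,4)$ and $A_2^\top A_2 = \begin{pmatrix} 1 & 1 \\ 1 & 2 \end{pmatrix}$; a one-line multiplication gives $A_1^\top A_1\,A_2^\top A_2 = \begin{pmatrix} 1 & 1 \\ 4 & 8 \end{pmatrix} \neq \begin{pmatrix} 1 & 4 \\ 1 & 8 \end{pmatrix} = A_2^\top A_2\,A_1^\top A_1$. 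Thus no common orthogonal $V$ can diagonalize both Gram matrices, and the joint factorization of the stated form cannot exist for this pair, hence not in general.

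The subtleties I would watch for, rather than genuine obstacles, are: (a) rectangular or rank-deficient $A_i$ — the Gram-matrix identity above is insensitive to the shape of $\Sigma_i$ and to zero singular values, so nothing is lost there; (b) using the \emph{orthogonal} simultaneous-diagonalization criterion (not the weaker "simultaneously diagonalizable by an invertible matrix" one), which is the correct one here precisely because $V$ is required orthogonal and the $A_i^\top A_i$ are symmetric; and (c) the wording "does not generally exist" — the single counterexample already establishes non-existence in general, and if one wants the stronger reading that failure is typical, one notes that the commuting condition $[A_p^\top A_p, A_q^\top A_q]=0$ cuts out a proper algebraic subvariety, so a generic tuple $\{A_i\}$ has non-commuting Gram matrices. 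If desired, I would append the matching remark that the commuting condition is also \emph{sufficient} (diagonalize the $A_i^\top A_i$ by a common $V$, read off $\Sigma_i^\top\Sigma_i$ from the eigenvalues, and build each $U_i$ from a polar/SVD step on $A_i V$), giving a clean characterization of exactly when such a joint decomposition does exist — which in turn motivates why a looser, hierarchical construction (as in FRoD) is needed in practice.
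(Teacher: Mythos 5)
Your argument mirrors the paper's proof exactly: pass to Gram matrices via $A_i^\top A_i = V(\Sigma_i^\top\Sigma_i)V^\top$, invoke the lemma that real symmetric matrices are simultaneously orthogonally diagonalizable if and only if they pairwise commute, and conclude that non-commuting Gram matrices obstruct the shared-$V$ factorization. In fact your write-up is more complete than the paper's, which asserts ``this counterexample shows\dots'' without ever exhibiting one, whereas you supply an explicit $2\times 2$ pair with a verified commutator computation; your closing remarks on genericity and the converse (sufficiency of commutativity for constructing such a decomposition) are correct bonuses the paper omits.
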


\begin{proof}
We prove this proposition by deriving a necessary condition for the existence of such a decomposition, and then demonstrating that this condition is not satisfied in general.

Assume the decomposition exists for each matrix $A_i \in \mathbb{R}^{m \times n}$ as:
\begin{equation}
A_i = U_i \Sigma_i V^T
\end{equation}
where $U_i \in \mathbb{R}^{m \times m}$ is orthogonal, $\Sigma_i \in \mathbb{R}^{m \times n}$ is diagonal with non-negative entries, and $V \in \mathbb{R}^{n \times n}$ is a common orthogonal matrix for all $i$.

Taking the transpose and computing $A_i^T A_i$:

\begin{align}
    A_i^T &= (U_i \Sigma_i V^T)^T = V \Sigma_i^T U_i^T \\
    A_i^T A_i &= V \Sigma_i^T U_i^T U_i \Sigma_i V^T = V (\Sigma_i^T \Sigma_i) V^T.
\end{align}

Define $D_i = \Sigma_i^T \Sigma_i$, which is symmetric and diagonal. Then:
\begin{equation} \label{eq:spectral}
A_i^T A_i = V D_i V^T.
\end{equation}

Equation \eqref{eq:spectral} represents the spectral decomposition of $A_i^T A_i$ with eigenvectors given by the common matrix $V$ and eigenvalues by $D_i$.

Hence, for all $i$, the matrices ${A_i^T A_i}$ must be simultaneously diagonalizable by the same orthogonal matrix $V$.

\begin{lemma}
A set of symmetric matrices can be simultaneously diagonalized by a single orthogonal matrix if and only if all matrices in the set commute, i.e.,
\begin{equation}
A_i^T A_i A_j^T A_j = A_j^T A_j A_i^T A_i, \quad \forall i, j
\end{equation}

\end{lemma}

This yields a necessary condition: the matrices ${A_i^T A_i}$ must commute. However, this condition is generally not satisfied unless the matrices ${A_i}$ are specially structured or aligned.

\noindent\textbf{Conclusion:} This counterexample shows that the required commutativity condition for joint spectral decomposition is not satisfied in general. Therefore, a factorization of the form $A_i = U_i \Sigma_i V^T$ with shared orthogonal $V$ does not exist for arbitrary ${A_i}$. This highlights the limitation of assuming a globally shared orthogonal basis in PEFT designs.

\end{proof}

As shown in Proposition~1, a joint decomposition with a common orthogonal matrix across arbitrary matrices does not generally exist, as the Gram matrices of arbitrary weights are generally non-commutative. This limitation motivates a relaxed structural constraint that still facilitates effective parameter sharing.

To this end, we adopt a hierarchical joint decomposition (HO-GSVD) to factorize each weight matrix \(W_i \in \mathbb{R}^{m \times n}\) as follows:
\begin{equation}
W_i \approx U_i \Sigma_i V^\top = (Q_i Z)\Sigma_i (R_{\text{global}}^\top Z)^\top,
\end{equation}
where:
\begin{itemize}
    \item \(U_i = Q_i Z \in \mathbb{R}^{m \times n}\) serves as an approximately orthogonal left basis;
    \item \(\Sigma_i \in \mathbb{R}^{n \times n}\) is a diagonal trainable strength matrix;
    \item \(V = R_{\text{global}}^\top Z \in \mathbb{R}^{n \times n}\) is a fixed shared latent basis;
    \item \(R_{\text{global}} \in \mathbb{R}^{n \times n}\) is an upper-triangular matrix from QR decomposition.
\end{itemize}

Here, \(R_{\text{global}}\) acts as a static projection operator into the shared latent space and remains frozen throughout training. We fine-tune the model by introducing a sparse perturbation \(S_i\) into the diagonal scaling matrix \(\Sigma_i\), leading to an updated weight:
\begin{equation}
W_i' = U_i (\Sigma_i + S_i) V^\top.
\end{equation}

The update matrix admits an orthogonal decomposition into two components:
\begin{itemize}
    \item \textbf{On-axis update} (magnitude adjustment):
    \begin{equation}
    \Delta W^{\text{on}}_i = U_i \Delta\Sigma_i Z^\top,
    \end{equation}
    \item \textbf{Off-axis rotation} (sparse perturbation):
    \begin{equation}
    \Delta W^{\text{off}}_i = U_i S_i Z^\top,
    \end{equation}
\end{itemize}

This formulation allows FRoD to introduce expressive rotational degrees of freedom within a stable shared basis, while preserving full-rank adaptation capability and promoting strong geometric regularity.

\subsection*{C.2. Theoretical Proof for the Preservation of the Singular Value Spectrum}

To theoretically justify the spectral robustness and geometric interpretability of our proposed update mechanism, we formalize and prove a series of results in the following subsections. Specifically, we begin by stating the following theorem:

\begin{theorem}[Spectral Stability Under Sparse Perturbations]
\label{thm:spectral_stability}
Let $\Sigma_i\in\mathbb{R}^{n\times n}$ be a diagonal matrix of singular values and let  
$S_i\in\mathbb{R}^{n\times n}$ be a perturbation such that  
\begin{equation}
\|S_i\|_0\ll n^2 
\quad\text{and}\quad
\max_{p,q}|s_{pq}|\le\varepsilon.
\end{equation}
Then, for every $k$,
\begin{equation}
\bigl|\sigma_k(\Sigma_i+S_i)-\sigma_k(\Sigma_i)\bigr|
\;\le\;
\|S_i\|_2 ,
\end{equation}
so the dominant singular directions encoded by $\Sigma_i$ are preserved up to a deviation bounded by $\|S_i\|_2$.
\end{theorem}

Then, we show that our update can be decomposed orthogonally with respect to the Frobenius inner product:

\begin{corollary}[Orthogonal Decomposition of Updates]
\label{cor:orthogonality}
Define the on-axis and off-axis updates as
\(
\Delta W_i^{\text{on}},\;\Delta W_i^{\text{off}}\in\mathbb{R}^{m\times n}.
\)
Then
\begin{equation}
\langle\Delta W_i^{\text{on}},\Delta W_i^{\text{off}}\rangle_F \;=\; 0 ,
\end{equation}
i.e.\ the two update components are orthogonal in the Frobenius inner product.
\end{corollary}

Finally, we derive an angular representation that characterizes the contribution of each component to the total update:

\begin{corollary}[Angular Representation of the Total Update]
\label{cor:angular_representation}
Let
\(
\tan\alpha=\|\Delta W_i^{\text{off}}\|_F/\|\Delta W_i^{\text{on}}\|_F
\)
and set
\(
\hat U_{\text{on}}=\Delta W_i^{\text{on}}/\|\Delta W_i^{\text{on}}\|_F,\;
\hat U_{\text{off}}=\Delta W_i^{\text{off}}/\|\Delta W_i^{\text{off}}\|_F.
\)
Then
\begin{equation}
\Delta W_i
=\|\Delta W_i\|_F\bigl(\cos\alpha\,\hat U_{\text{on}}
+\sin\alpha\,\hat U_{\text{off}}\bigr).
\end{equation}

For small $\alpha$ (when $\|\Delta W_i^{\text{on}}\|_F\gg\|\Delta W_i^{\text{off}}\|_F$),
\begin{equation}
\Delta W_i \approx \|\Delta W_i^{\text{on}}\|_F \left( \hat{U}_{\text{on}} + \alpha \cdot \hat{U}_{\text{off}} \right) R_\text{global}.
\end{equation}

Thus, the total update can be interpreted as a rotation from the axis-aligned (scaling) direction by a small angle $\alpha$, reflecting a controllable expansion as Shared Random-Basis PEFT. 
\end{corollary}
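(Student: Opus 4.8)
The plan is to treat the exact identity as a purely two-dimensional statement in matrix space and the small-angle version as a first-order Taylor expansion, with Corollary~\ref{cor:orthogonality} supplying the single nontrivial ingredient. First I would set $a=\|\Delta W_i^{\text{on}}\|_F$ and $b=\|\Delta W_i^{\text{off}}\|_F$, assuming $a>0$ so that $\alpha$ with $\tan\alpha=b/a$ is well defined (the degenerate case $a=0$, i.e.\ $\alpha=\pi/2$, being handled separately). By Corollary~\ref{cor:orthogonality} the normalized matrices $\hat U_{\text{on}}$ and $\hat U_{\text{off}}$ are orthonormal with respect to the Frobenius inner product, so they span a $2$-plane $\mathcal{P}\subset\mathbb{R}^{m\times n}$ and every element of $\mathcal{P}$ admits polar coordinates in that plane.

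For the exact identity I would invoke the Pythagorean relation that orthogonality gives: $\|\Delta W_i^{\text{on}}+\Delta W_i^{\text{off}}\|_F^2=a^2+b^2$. Writing $\rho=\sqrt{a^2+b^2}$ and using $\tan\alpha=b/a$ to obtain $\cos\alpha=a/\rho$ and $\sin\alpha=b/\rho$, one has
\begin{equation}
\Delta W_i^{\text{on}}+\Delta W_i^{\text{off}}=a\,\hat U_{\text{on}}+b\,\hat U_{\text{off}}=\rho\bigl(\cos\alpha\,\hat U_{\text{on}}+\sin\alpha\,\hat U_{\text{off}}\bigr).
\end{equation}
Identifying $\rho$ with $\|\Delta W_i\|_F$ — exact once $R_{\text{global}}$ is taken orthonormal as in Section~\ref{sec:rdof}, since right multiplication by an orthogonal matrix preserves $\|\cdot\|_F$ — yields the first displayed equation of the corollary.

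For the small-angle regime I would Taylor-expand $\cos\alpha=1-\tfrac12\alpha^2+O(\alpha^4)$, $\sin\alpha=\alpha+O(\alpha^3)$, and $\rho=a\sec\alpha=a\bigl(1+O(\alpha^2)\bigr)$, so $\rho=\|\Delta W_i^{\text{on}}\|_F+O(\alpha^2)$; substituting into the previous display and dropping second-order terms gives $\Delta W_i^{\text{on}}+\Delta W_i^{\text{off}}\approx\|\Delta W_i^{\text{on}}\|_F\bigl(\hat U_{\text{on}}+\alpha\,\hat U_{\text{off}}\bigr)$. Right multiplying by $R_{\text{global}}$, which by the construction in Section~\ref{sec:rdof} transports the latent-frame update back to $\Delta W_i=(\Delta W_i^{\text{on}}+\Delta W_i^{\text{off}})R_{\text{global}}$, produces the stated approximation. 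The geometric reading is then immediate: $\cos\alpha\,\hat U_{\text{on}}+\sin\alpha\,\hat U_{\text{off}}$ is precisely the image of the unit on-axis direction under the planar rotation by angle $\alpha$ toward $\hat U_{\text{off}}$ inside $\mathcal{P}$, so the total update is a scaled rotation of the pure-scaling direction by the controllable angle $\alpha$, matching the Shared Random-Basis PEFT interpretation.

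The argument is essentially routine given Corollary~\ref{cor:orthogonality}; the one place that needs care is the bookkeeping of the $R_{\text{global}}$ factor, since $\Delta W_i^{\text{on}}$ and $\Delta W_i^{\text{off}}$ as defined in Section~\ref{sec:rdof} omit the trailing $R_{\text{global}}$ whereas $\Delta W_i$ does not. I would resolve this by phrasing the angular identity in the latent frame (where $\|\cdot\|_F$ is exactly preserved whenever $R_{\text{global}}$ is orthonormal) and only then transporting through $R_{\text{global}}$, and I would explicitly flag the degenerate case $\Delta W_i^{\text{on}}=0$, where $\alpha=\pi/2$ and the small-angle expansion is vacuous.
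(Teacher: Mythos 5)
Your proof takes essentially the same route as the paper's: Corollary~\ref{cor:orthogonality} supplies Frobenius orthogonality, the exact identity is then polar coordinates in the two-plane spanned by $\hat U_{\text{on}}$ and $\hat U_{\text{off}}$, and the small-angle version is a first-order expansion of $\cos$, $\sin$, and the norm. The one place you are genuinely more careful is the $R_{\text{global}}$ bookkeeping, and that care is warranted: as stated, the corollary's exact identity has $\|\Delta W_i\|_F$ on the right with no trailing $R_{\text{global}}$, yet $\Delta W_i = (\Delta W_i^{\text{on}} + \Delta W_i^{\text{off}})R_{\text{global}}$ while the on/off pieces omit that factor, and the paper's own proof silently reattaches $R_{\text{global}}$ only when passing to the small-angle form. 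Your repair — derive the polar decomposition in the latent frame, identify $\rho$ with $\|\Delta W_i\|_F$ using the assumption that $R_{\text{global}}$ is orthogonal so right-multiplication preserves Frobenius norm, and then transport by $R_{\text{global}}$ — is the right one, and your explicit flag that the small-angle expansion is vacuous when $\Delta W_i^{\text{on}} = 0$ is a legitimate if minor omission in the paper. The remaining caveat, which you correctly hedge, is that the orthogonality of $R_{\text{global}}$ is asserted in Section~\ref{sec:rdof} but sits in tension with its construction as an upper-triangular QR factor elsewhere in the paper; if that assumption is dropped, the identification $\rho = \|\Delta W_i\|_F$ fails and the exact identity needs the $R_{\text{global}}$ factor made explicit.
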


\subsubsection*{C.2.1. Proof of Theorem~\ref{thm:spectral_stability}: Spectral Stability Under Sparse Perturbations}

We consider a diagonal matrix of singular values $\Sigma_i \in \mathbb{R}^{n \times n}$ and a perturbation matrix $S_i \in \mathbb{R}^{n \times n}$ which is assumed to be sparse and entry-wise bounded. Our goal is to show that the singular values of the perturbed matrix $\Sigma_i + S_i$ remain close to those of $\Sigma_i$.

\begin{proof}
This result relies on Weyl’s inequality, a classical tool in spectral perturbation theory. For any two square matrices $A$ and $E$ of the same size, and for each singular value index $k$, Weyl’s inequality guarantees:
\begin{equation}
\bigl| \sigma_k(A + E) - \sigma_k(A) \bigr| \le \|E\|_2,
\end{equation}
where $|\cdot|_2$ denotes the spectral norm, i.e., the largest singular value of $E$.

In our setting, we take $A = \Sigma_i$ and $E = S_i$. Since $\Sigma_i$ is diagonal with real non-negative entries, its singular values are exactly its diagonal entries, and we assume they are ordered decreasingly. By applying Weyl’s inequality directly, we obtain:
\begin{equation}
\bigl| \sigma_k(\Sigma_i + S_i) - \sigma_k(\Sigma_i) \bigr| \le |S_i|_2 \quad \text{for all } k = 1, \dots, n.
\end{equation}

Now, to give a more informative bound, we estimate the spectral norm $|S_i|_2$ using its sparsity and element-wise bound. Recall that for any matrix $S$, we have the inequality:
\begin{equation}
\|S\|_2 \le \|S\|_F,
\end{equation}
where $\|S\|F$ is the Frobenius norm. Since $S_i$ has at most $s = \|S_i\|_0$ nonzero entries, and each entry satisfies $\|s_{pq}\| \le \varepsilon$, we get:
\begin{equation}
\|S_i\|F^2 = \sum{p,q} \|s_{pq}\|^2 \le s \cdot \varepsilon^2,
\end{equation}
and therefore:
\begin{equation}
\|S_i\|_2 \le \|S_i\|_F \le \sqrt{s} \cdot \varepsilon.
\end{equation}

Combining the inequalities, we conclude:
\begin{equation}
\bigl| \sigma_k(\Sigma_i + S_i) - \sigma_k(\Sigma_i) \bigr| \le \sqrt{\|S_i\|_0} \cdot \varepsilon,
\end{equation}
for every singular value index $k$. This result confirms that if the number of perturbed entries is small (i.e., $S_i$ is sparse), and each perturbation is small in magnitude (bounded by $\varepsilon$), then the entire spectrum of the matrix remains stable, with all singular values deviating by at most $\mathcal{O}(\sqrt{\|S_i\|_0} \cdot \varepsilon)$.

In particular, the dominant singular values, which control the main projection directions in low-rank approximations, are preserved up to a small bounded deviation. This justifies the use of a sparse additive matrix $S_i$ on top of $\Sigma_i$ in our method without significantly distorting the spectral structure.
\end{proof}

\subsubsection*{C.2.2. Proof of Corollary~\ref{cor:orthogonality}: Orthogonality of $\Delta W_i^{\text{on}}$ and$ \Delta W_i^{\text{off}}$}

We compute the Frobenius inner product between the on-axis and off-axis update components:
\begin{equation}
\langle \Delta W_i^{\text{on}}, \Delta W_i^{\text{off}} \rangle_F = \operatorname{Tr}\left((\Delta W_i^{\text{on}})^\top \Delta W_i^{\text{off}}\right).
\end{equation}

Taking the transpose:
\begin{equation}
(\Delta W_i^{\text{on}})^\top = (U_i \Delta \Sigma_i Z^\top)^\top = Z \Delta \Sigma_i U_i^\top.
\end{equation}

Substituting into the inner product:
\begin{align}
\langle \Delta W_i^{\text{on}}, \Delta W_i^{\text{off}} \rangle_F 
&= \operatorname{Tr}(Z \Delta \Sigma_i U_i^\top \cdot U_i S_i Z^\top) \\
&= \operatorname{Tr}(Z \Delta \Sigma_i (U_i^\top U_i) S_i Z^\top).
\end{align}

Assuming that $U_i^\top U_i = I$ (which holds exactly if $U_i$ is orthogonal, or approximately under near-orthogonality), we obtain:
\begin{equation}
\langle \Delta W_i^{\text{on}}, \Delta W_i^{\text{off}} \rangle_F = \operatorname{Tr}(Z \Delta \Sigma_i S_i Z^\top).
\end{equation}

Using the cyclic property of the trace:
\begin{equation}
\operatorname{Tr}(Z \Delta \Sigma_i S_i Z^\top) = \operatorname{Tr}(Z^\top Z \Delta \Sigma_i S_i).
\end{equation}

Under the assumption that$ Z^\top Z = I$, this simplifies to:
\begin{equation}
\operatorname{Tr}(\Delta \Sigma_i S_i).
\end{equation}

Since $\Delta \Sigma_i$ is diagonal and $S_i$ is strictly off-diagonal (i.e., $(S_i)_{jj} = 0$ for all $j$), we have:
\begin{equation}
\operatorname{Tr}(\Delta \Sigma_i S_i) = \sum_j (\Delta \Sigma_i)_{jj} (S_i)_{jj} = 0.
\end{equation}

Therefore,
\begin{equation}
\langle \Delta W_i^{\text{on}}, \Delta W_i^{\text{off}} \rangle_F = 0.
\end{equation}

The on-axis and off-axis updates are orthogonal under the Frobenius inner product, which allows us to interpret their effects as geometrically independent. This orthogonality also enables a clean angular decomposition of the total update, as shown in Corollary~\ref{cor:angular_representation}.

\subsubsection*{C.2.3. Proof of Corollary~\ref{cor:angular_representation}: Angular Representation of the Total Update}

We aim to represent the total update $\Delta W_i$ as a combination of two orthogonal unit matrices with a scalar rotation parameter $\alpha$, under a shared projection basis $R_{\text{global}}$.

We start from:
\begin{align}
    \Delta W_i &= (\Delta W_i^{\text{on}} + \Delta W_i^{\text{off}})R_{\text{global}} \\
    &= (U_i \Delta\Sigma_i Z^\top + U_i S_i Z^\top)R_{\text{global}}\\ 
    &= U_i (\Delta\Sigma_i + S_i) Z^\top R_{\text{global}}.
\end{align}

We define the unit direction matrices:
\begin{equation}
\hat{U}_{\text{on}} = \frac{\Delta W_i^{\text{on}}}{\|\Delta W_i^{\text{on}}\|_F}, \quad
\hat{U}_{\text{off}} = \frac{\Delta W_i^{\text{off}}}{\|\Delta W_i^{\text{off}}\|_F},
\end{equation}
and define the angular parameter:
\begin{equation}
\tan \alpha = \frac{\|\Delta W_i^{\text{off}}\|_F}{\|\Delta W_i^{\text{on}}\|_F}.
\end{equation}

Then, the exact angular decomposition becomes:
\begin{equation}
\Delta W_i = \|\Delta W_i\|_F \left( \cos \alpha \cdot \hat{U}_{\text{on}} + \sin \alpha \cdot \hat{U}_{\text{off}} \right).
\end{equation}

\noindent\textbf{Small-angle approximation:} Assuming $\alpha \ll 1$, we have:
\begin{equation}
\cos \alpha \approx 1, \quad \sin \alpha \approx \alpha.
\end{equation}

Furthermore, since $\|\Delta W_i^{\text{off}}\|_F = \|\Delta W_i^{\text{on}}\|_F \cdot \alpha$, the total Frobenius norm becomes:
\begin{equation}
\|\Delta W_i^{\text{on}}\|_F^2 + \alpha^2 \|\Delta W_i^{\text{on}}\|_F^2 = \|\Delta W_i^{\text{on}}\|_F^2 (1 + \alpha^2),
\end{equation}
\begin{equation}
\|\Delta W_i\|_F = \|\Delta W_i^{\text{on}}\|_F \cdot \sqrt{1 + \alpha^2} \approx \|\Delta W_i^{\text{on}}\|_F (1 + \frac{1}{2}\alpha^2).
\end{equation}

Substitute back into the expression:
\begin{align}
\Delta W_i
&\approx \|\Delta W_i^{\text{on}}\|_F (1 + \tfrac{1}{2} \alpha^2) \left( \hat{U}_{\text{on}} + \alpha \cdot \hat{U}_{\text{off}} \right) R_{\text{global}} \\
&\approx \|\Delta W_i^{\text{on}}\|_F \left( \hat{U}_{\text{on}} + \alpha \cdot \hat{U}_{\text{off}} + \tfrac{1}{2}\alpha^2 \cdot \hat{U}_{\text{on}} \right) R_{\text{global}}.
\end{align}

Under small-angle approximation ($\alpha \ll 1$), the total update admits the compact directional form:
\begin{equation}
\Delta W_i \approx \|\Delta W_i^{\text{on}}\|_F \left( \hat{U}_{\text{on}} + \alpha \cdot \hat{U}_{\text{off}} \right) R_{\text{global}},
\end{equation}
which reflects a dominant axis-wise update with a second-order correction along the orthogonal direction. This completes the proof of Corollary~\ref{cor:angular_representation}.

\subsection*{C.3. Brief Summary}
In this section, we establish the theoretical foundations for the decomposition and update structure underlying FRoD. We begin by proving that, due to the general non-commutativity of Gram matrices, a global joint factorization with a shared orthogonal matrix across arbitrary sets of weight matrices is, in general, infeasible. Motivated by this structural limitation, we adopt a more relaxed framework—Hierarchical Joint Decomposition—which leverages QR factorization and eigendecomposition to derive approximately orthogonal left bases and a globally shared latent space. This decomposition serves as the backbone of our subsequent fine-tuning formulation.

On top of this decomposition framework, we propose a two-component update mechanism that separates parameter updates into (i) on-axis components, which maintain the singular value magnitudes, and (ii) off-axis components, which introduce rotational degrees of freedom. We theoretically validate this formulation through three key results: 1) a spectral stability theorem demonstrating that sparse off-axis perturbations preserve the dominant singular spectrum within a bounded deviation; 2) a Frobenius-orthogonality result showing that on-axis and off-axis updates are orthogonal under the Frobenius inner product, thereby enabling a clean geometric decomposition; and 3) an angular representation corollary, which captures the update direction as a low-dimensional rotation, succinctly encoding both magnitude and directional adjustments.

Collectively, these results show that FRoD’s update structure preserves the spectral integrity of the pretrained model while introducing structured and interpretable degrees of freedom. This provides both a theoretical rationale and a principled framework for structurally enhanced parameter-efficient fine-tuning.

\section*{D. Experiment Details}

\subsection*{D.1. Datasets Details}
For reproducibility, all datasets employed in our experiments are sourced from the \textbf{Hugging Face} open-source dataset repository.

\noindent \textbf{Image Classification(IC) tasks}:
\begin{itemize}
    \item \textbf{Cars} \cite{krause_cars_2013} from "tanganke/stanford\_cars" datasets containing 8.14k train images and 8.04k test images across 196 categories.
    \item \textbf{DTD} \cite{cimpoi_dtd_2014} from "tanganke/dtd" datasets containing 3.76k train images and 1.88k test images across 47 categories. 
    \item \textbf{EuroSAT} \cite{helber_eurosat_2019} from "tanganke/eurosat" datasets containing 21.6k train images and 2.7k test images across 10 categories.
    \item \textbf{GTSRB} \cite{stallkamp_gstrb_2011} from "tanganke/gtsrb" datasets containing 26.6k train images and 12.6k test images across 43 categories.
    \item \textbf{RESISC45} \cite{cheng_resisc45_2017} from "tanganke/resisc45" datasets containing 18.9k train images and 1.88k test images across 45 categories.
    \item \textbf{SUN397} \cite{xiao_sun_2016} from "tanganke/sun397" datasets containing 19.9k train images and 19.9k test images across 287 categories.
    \item \textbf{SVHN} \cite{netzer_svnh_nodate} from "ufldl-stanford/svhn" datasets with "cropped\_digits" subset containing 73.3k train images and 26k test images across 10 categories.
\end{itemize}

\noindent \textbf{Commonsense Reasoning(CR) Tasks}:
\begin{itemize}
    \item \textbf{BoolQ} \cite{clark_boolq_2019} from "google/boolq" datasets containing 9.43k train rows and 3.27k validation rows across 2 labels.
    \item \textbf{PIQA} \cite{bisk_piqa_2020} from "baber/piqa" datasets containing 16.1k train rows and 1.84k validation rows across 2 labels. 
    \item \textbf{SIQA} \cite{sap_siqa_2019} from "lighteval/siqa" datasets containing 33.4k train rows and 1.95k validation rows across 3 labels.
    \item \textbf{HellaSwag} \cite{zellers_hellaswag_2019} from "Rowan/hellaswag" datasets containing 39.9k train rows and 10k validation rows across 4 labels.
    \item \textbf{WinoGrande} \cite{sakaguchi_winogrande_2020} from "allenai/winogrande" datasets with subset "winogrande\_l" containing 10.2k train rows and 1.27k validation rows across 2 labels.
    \item \textbf{OBQA} \cite{mihaylov_obqa_2018} from "allenai/openbookqa" datasets with subset "main" containing 4.96k train rows and 500 test rows across 4 labels.
\end{itemize}

Since we evaluate our PEFT method on natural language generation (NLG) models using \textbf{Commonsense Reasoning} tasks, we present our zero-shot prompt here:

\begin{itemize}
    \item \textbf{BoolQ} "Reference document:\\{example['passage']}\\Question:\\{example['question']}"
    \item \textbf{PIQA} "Physical situation description:\\{example['goal']}\\Choice A:\\{example['sol1']}\\Choice B:\\{example['sol2']}\\Question:\\Which solution is more physically plausible, Choice A or Choice B?"
    \item \textbf{SIQA} "Please choose the correct Choice to the question: {example['context']}, Question:\\{example['question']}\\Choice A:\\{example['answerA']}\\Choice B:\\{example['answerB']}\\Choice C:\\{example['answerC']}\\Which choice best answers the question, Choice A, B, or C?"
    \item \textbf{HellaSwag} "Context:\\{example['ctx']}\\{endings}Question:\\Which ending is the most plausible continuation, Ending A, B, C, or D?"
    \item \textbf{WinoGrande} "Please choose the correct answer to fill in the blank to complete the given sentence: {example['sentence']}\\\\Option1: {example['option1']}\\Option2: {example['option2']}."
    \item \textbf{OBQA} "Please choose the correct answer to the question:{example['question\_stem']}\\{opts}."
\end{itemize}

\noindent \textbf{Natural Language Understanding (NLU) Tasks}: We utilize the "nyu-mll/glue" dataset collection, covering seven different tasks: \textbf{CoLA}, \textbf{SST-2}, \textbf{MRPC}, \textbf{QQP}, \textbf{MNLI}, \textbf{QNLI}, and \textbf{RTE}.

\subsection*{D.2. Baseline details}

We adopted a subset of GOAT’s baseline results and obtained the necessary data permissions from the original authors. In addition, we trained the following baseline models:

\begin{itemize}
    \item \textbf{Full FT} refers to fine-tuning the model with all parameters. To better highlight the fast-convergence property of our approach, we trained the Full FT baseline for the same number of epochs and swept several learning rates. We then chose task-specific rates that produced the best average results: 1e-4 for IC, 5e-5 for CR, and 1e-5 for NLU.
    \item \textbf{VeRA} \cite{kopiczko_vera_2023} introduces two frozen and shared random-basis, and finetunes two vectors that significantly reduces the number of trainable parameters compared to LoRA. We train each model for more than ten epochs to ensure that the results on these functions have converged.
    \item \textbf{RandLoRA} \cite{albert_randlora_2025} achieves parameter efficiency and low memory cost while enabling \textbf{full rank} model updates. Similar to VeRA, we use same epochs.
\end{itemize}

\subsection* {D.3. Implementation Details}

All experiments were conducted on a heterogeneous GPU cluster consisting of four NVIDIA RTX 6000 Ada (49 GB) and four NVIDIA V100 (32 GB) units. All LLaMA training runs were performed using bf16 precision.

\begin{table}[t]
\centering
\caption{Hyperparameters of the image classification tasks for FRoD.}
\renewcommand{\arraystretch}{1.2}
\begin{tabular}{l|ccccccc}
\toprule
\textbf{Hyperparameter} & Cars & DTD & EuroSAT & GTSRB & RESISC45 & SUN397 & SVHN \\
\midrule
Batch Size & \multicolumn{7}{c}{128} \\
Optimizer & \multicolumn{7}{c}{AdamW} \\
Learning Rate($\Sigma_i$) & \multicolumn{7}{c}{5e-3} \\
Learning Rate($S$) & \multicolumn{7}{c}{5e-4} \\
Epochs &   \multicolumn{7}{c}{4} \\
\bottomrule
\end{tabular}
\end{table}

\begin{table}[h]
\centering
\caption{Hyperparameters of the commonsense reasoning tasks for FRoD.}
\renewcommand{\arraystretch}{1.2}
\begin{tabular}{l|cccccc}
\toprule
\textbf{Hyperparameter} & BoolQ & PIQA & SIQA & HellaSwag & WinoGrande & OBQA\\
\midrule
Batch Size & \multicolumn{6}{c}{32} \\
Optimizer & \multicolumn{6}{c}{AdamW} \\
Warmup Steps & \multicolumn{6}{c}{0.1} \\
LR Schedule & \multicolumn{6}{c}{Cosine} \\
Learning Rate($\Sigma_i$) & \multicolumn{6}{c}{1e-4} \\
Learning Rate($S$) & \multicolumn{6}{c}{1e-5} \\
Epochs &   \multicolumn{6}{c}{3} \\
\bottomrule
\end{tabular}
\end{table}

\subsection* {D.4. Hyperparameters setting of FRoD}

We fine-tune our model on each task using a single set of default hyperparameters, without any prior grid search or ablation. Our goal in the main experiments is not to over-optimize for each benchmark, but rather to verify whether our method can perform competitively under minimal tuning. All learning rates, batch sizes, and training epochs were heuristically selected at the outset and remained fixed across runs. Consequently, some main results may underperform those from later ablation studies. However, this minimal configuration setting is a deliberate design choice: it highlights our method’s rapid convergence and robustness under lightweight training.

Tables 1–3 summarize the hyperparameter configurations used across image classification, commonsense reasoning, and natural language understanding (NLU) tasks, respectively. Notably, we employ only 4 epochs for all image classification datasets, 3 epochs for most commonsense and NLU tasks, and 10 epochs only for RTE, which is known to benefit from longer training. These short schedules demonstrate one of our core strengths: our method achieves strong results with dramatically fewer training iterations compared to conventional approaches. For all experiments, we use AdamW as the optimizer, cosine learning rate schedules when applicable.

\begin{table*}[t]
\centering
\caption{Hyperparameters of the natural language understanding tasks for FRoD.}
\renewcommand{\arraystretch}{1.2}
\begin{tabular}{l|ccccccc}
\toprule
\textbf{Hyperparameter} & CoLA & SST-2 & MRPC & QQP & MNLI & QNLI & RTE \\
\midrule
Batch Size & \multicolumn{7}{c}{64}\\
Optimizer & \multicolumn{7}{c}{AdamW} \\
Warmup Steps & \multicolumn{7}{c}{0.1} \\
LR Schedule & \multicolumn{7}{c}{Cosine} \\
Learning Rate($\Sigma_i$) & \multicolumn{7}{c}{5e-4} \\
Learning Rate($S$) & \multicolumn{7}{c}{5e-5} \\
\midrule
Epochs & 3 & 3 & 3 & 3 & 3 & 3 & 10 \\
\bottomrule
\end{tabular}
\end{table*}

\subsection*{D.5. Experiments on Additional Initialization Methods}
Figure~\ref{fig:error} visualizes the per-layer singular value spectra for PiSSA initialization versus our initialization. PiSSA’s layer-wise truncated SVD often omits important large singular directions, leading to uneven signal capture and slower convergence. In contrast, FRoD jointly decomposes across all layers, ensuring that each layer retains its dominant singular modes. As a result, the minimum leading singular value across layers is significantly higher under Joint Decomposition, demonstrating its superior ability to preserve parameter effectiveness in all subspaces.
\begin{figure}[t]
\centering
\includegraphics[width=\linewidth]{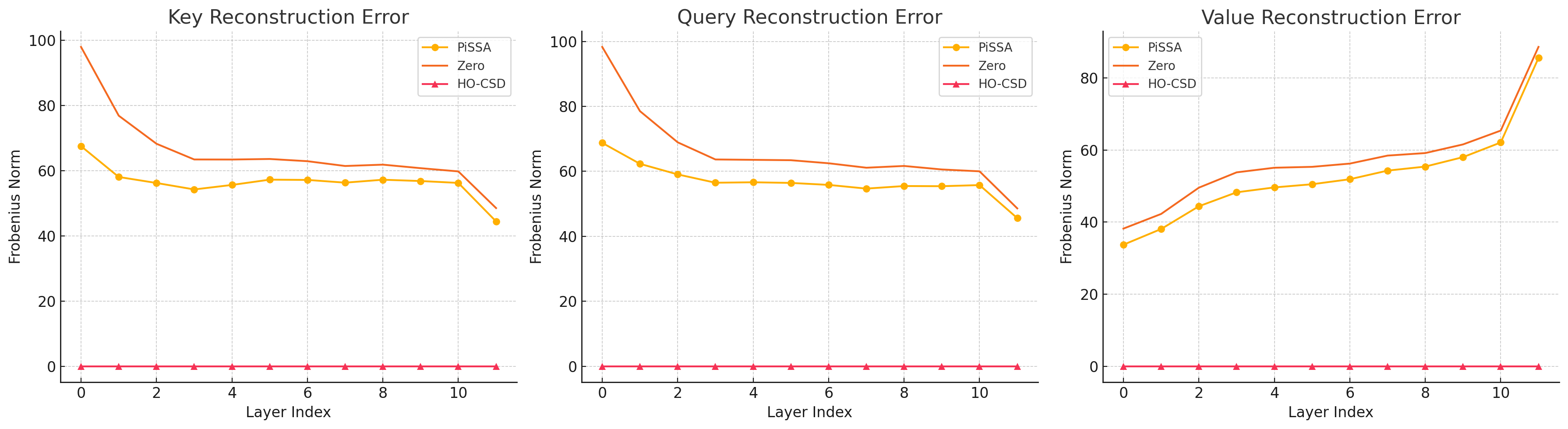}
\caption{Illustration of Reconstruction Error. We use the pretrained CLIP ViT-B/32 model as the initialization matrix to analyze the reconstruction error induced by large singular vector estimation and by our proposed method.}
\label{fig:error}
\end{figure}

\begin{figure}[t]
\centering
\includegraphics[width=\linewidth]{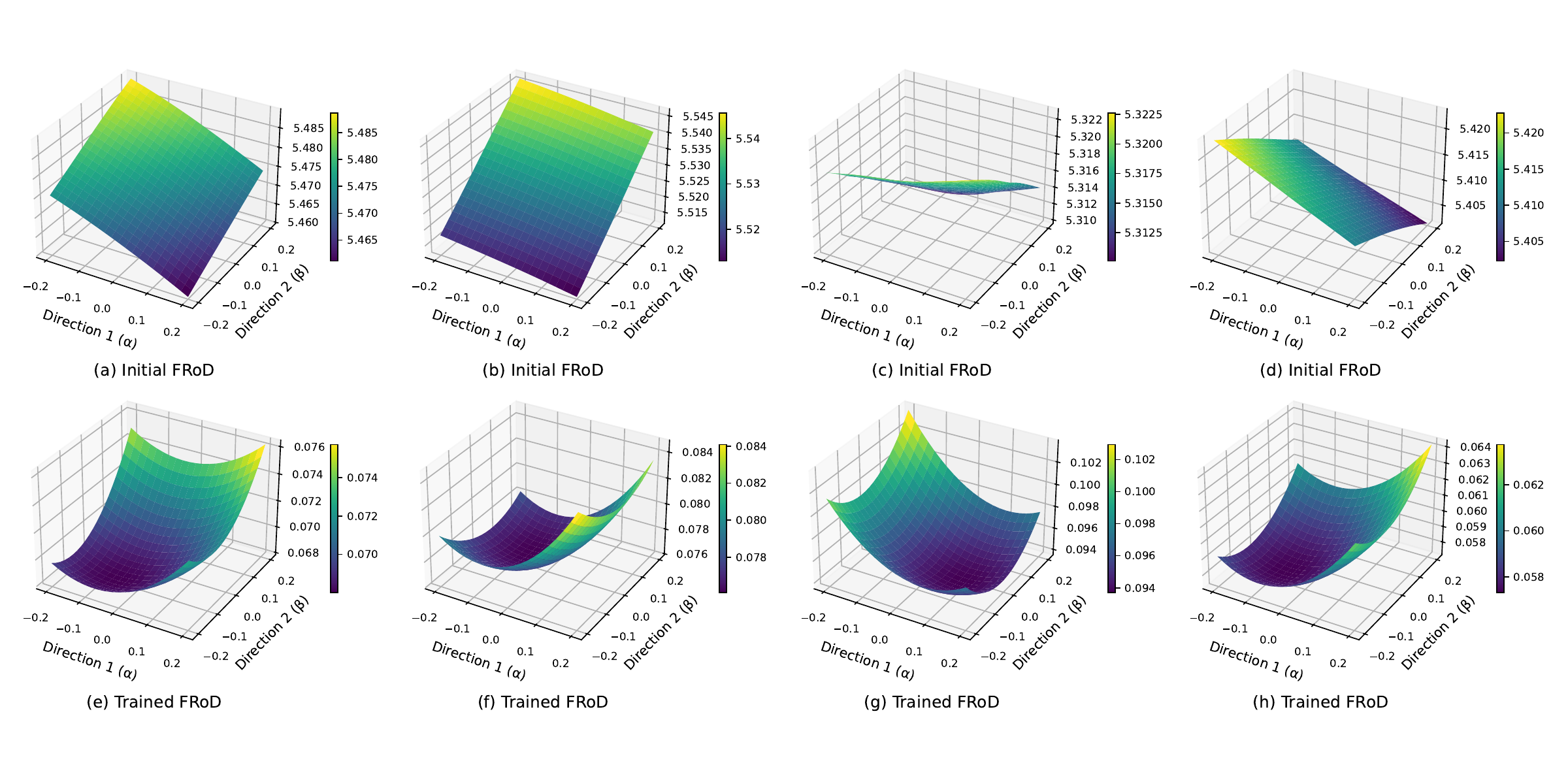}
\caption{Comparison of loss landscapes for 4 different random seeds with sparsity of $0.01$.}
\label{fig:0.01_landscapes}
\vspace{-1.0em}
\end{figure}

\begin{figure}[h]
\centering
\includegraphics[width=\linewidth]{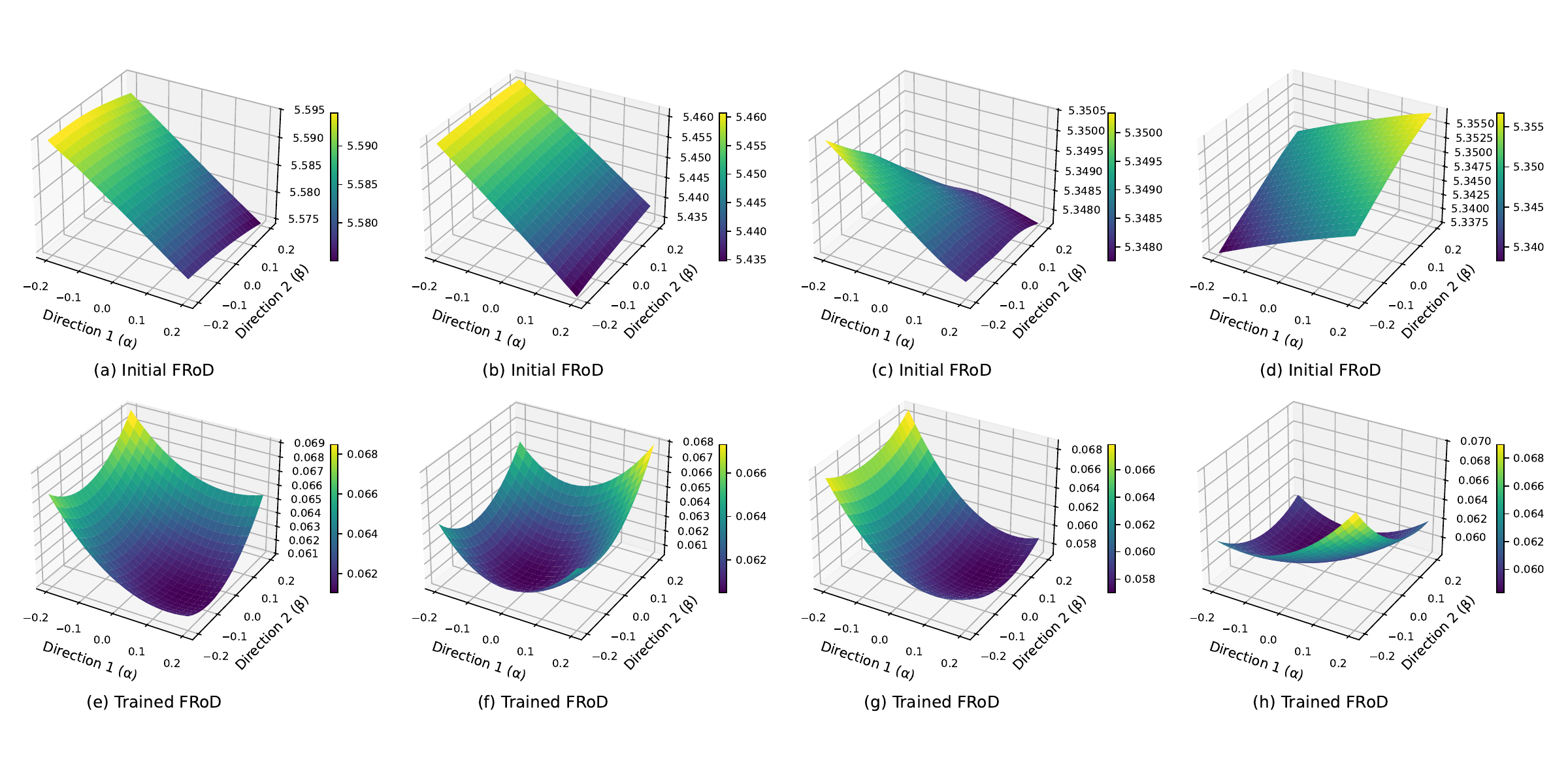}
\caption{Comparison of loss landscapes for 4 different random seeds with sparsity of $0.1$.}
\label{fig:0.1_landscapes}
\vspace{-1.0em}
\end{figure}

\subsection* {D.6. Sparsity for Loss Landscape}
\figurename~\ref{fig:0.01_landscapes} and \ref{fig:0.1_landscapes} illustrate the loss landscapes of the FRoD method under two sparsity settings, $s = 0.01$ and $s = 0.1$, across different random seeds. From the visualizations, we observe a clear trend toward a more rugged optimization surface as sparsity increases. Specifically, when $s = 0.1$, the post-training loss landscape (top of \figurename~\ref{fig:0.1_landscapes}) exhibits steeper curvature and more complex perturbation patterns compared to the smoother, quasi-axially linear structure observed when $s = 0.01$ (\figurename~\ref{fig:0.01_landscapes}). This change indicates that a higher sparsity design expands the expressiveness of the update subspace, allowing it to approximate the expressive power of full-parameter fine-tuning. However, such flexibility also introduces stronger non-convexity (see \figurename~\ref{fig:0.1_landscapes}c and \figurename~\ref{fig:landscape}a), which in turn impacts optimization stability and convergence speed.

This phenomenon is further corroborated by the experimental results in Table 4. The table summarizes average accuracy across epochs under different hyperparameter settings. Notably, when on-axis strength updates are disabled (i.e., $\mathrm{lr}(\Sigma_i) = 0$), the best average accuracy for $s = 0.1$ is significantly lower than that for $s = 0.02$. For example, at epoch 10, the best configuration for $s = 0.1$ reaches only 46.95\%, whereas $s = 0.02$ achieves 49.62\%; similarly, in another task at epoch 10, $s = 0.1$ achieves 72.77\% compared to 73.12\% for $s = 0.02$, with the gap gradually narrowing due to the influence of the larger parameter space. These results suggest that although higher sparsity theoretically provides greater parameter freedom, the resulting ruggedness of the optimization landscape can hinder convergence speed and final performance.

In summary, excessive sparsity may lead to overexpansion of the update space, introducing unstable directions and impeding the optimization process. Therefore, a moderate sparsity level (e.g., $s = 0.01$ or $0.02$) is more conducive to achieving a balance between expressiveness and optimization stability.

\section*{E. Ablation Study Details}

\begin{table*}[t]
\centering
\scriptsize
\renewcommand{\arraystretch}{1.1}
\begin{tabular}{c c c|c c c || c c c|c c c}
\toprule
\multicolumn{3}{c|}{\textbf{Hyperparameters}} & \multicolumn{3}{c||}{\textbf{Avg.\ Acc. (epoch) (\%)}} & 
\multicolumn{3}{c|}{\textbf{Hyperparameters}} & \multicolumn{3}{c}{\textbf{Avg.\ Acc. (epoch) (\%)}} \\
\cmidrule(lr){1-3} \cmidrule(lr){4-6} \cmidrule(lr){7-9} \cmidrule(lr){10-12}
$s$ & $\mathrm{lr}(S)$ & $\mathrm{lr}(\Sigma_i)$ & 1 & 4 & 10 &
$s$ & $\mathrm{lr}(S)$ & $\mathrm{lr}(\Sigma_i)$ & 1 & 4 & 10 \\
\midrule

$0.0$ & 0 & 1e-04 & 25.89 & 55.89 & 70.77 & $0.01$ & 1e-05 & 0 & 29.83 & 62.24 & 71.17\\
$0.0$ & 0 & 5e-04 & 34.20 & 66.52 & 71.26 & $0.01$ & 5e-05 & 0 & 44.38 & 68.71 & 71.49\\
$0.0$ & 0 & 1e-03 & 42.13 & 68.58 & 70.86 & $0.01$ & 1e-04 & 0 & 47.72 & 71.33 & 72.88\\
$0.0$ & 0 & 5e-03 & 50.03 & 65.66 & 67.28 & $0.01$ & 5e-04 & 0 & 42.29 & 53.69 & 59.18\\
 - & - & - & - & - & - & $0.02$ & 1e-05 & 0 & 29.96 & 62.47 & 70.77\\
 - & - & - & - & - & - & $0.02$ & 5e-05 & 0 & 44.51 & 68.87 & 70.92\\
 - & - & - & - & - & - & $0.02$ & 1e-04 & 0 & 49.62 & 70.89 & 73.12\\
 - & - & - & - & - & - & $0.02$ & 5e-04 & 0 & 41.59 & 55.28 & 60.60\\
 - & - & - & - & - & - & $0.1$ & 1e-05 & 0 & 29.89 & 61.65 & 70.97\\
 - & - & - & - & - & - & $0.1$ & 5e-05 & 0 & 43.62 & 68.54 & 71.35\\
 - & - & - & - & - & - & $0.1$ & 1e-04 & 0 & 46.95 & 70.68 & 72.77\\
 - & - & - & - & - & - & $0.1$ & 5e-04 & 0 & 40.64 & 56.43 & 61.20\\
 \midrule
$0.1$ & 5e-04 & 5e-03 & 37.47 & 48.51 & 52.22 & $0.1$ & 5e-04 & 1e-03 & 43.82 & 57.32 & 61.83\\
$0.1$ & 5e-04 & 5e-04 & 42.74 & 54.27 & 59.95 & $0.1$ & 5e-04 & 1e-04 & 41.73 & 54.42 & 60.30\\
$0.1$ & 1e-04 & 5e-03 & 50.40 & 68.23 & 70.12 & $0.1$ & 1e-04 & 1e-03 & 51.78 & 72.36 & 74.74\\
$0.1$ & 1e-04 & 5e-04 & 49.31 & 71.17 & 73.91 & $0.1$ & 1e-04 & 1e-04 & 49.01 & 70.65 & 72.78\\
$0.1$ & 5e-05 & 5e-03 & 49.86 & 67.22 & 69.76 & $0.1$ & 5e-05 & 1e-03 & 49.57 & 72.37 & 74.09\\
$0.1$ & 5e-05 & 5e-04 & 45.31 & 71.25 & 73.01 & $0.1$ & 5e-05 & 1e-04 & 45.03 & 70.02 & 71.41\\
$0.1$ & 1e-05 & 5e-03 & 50.56 & 65.96 & 68.25 & $0.1$ & 1e-05 & 1e-03 & 42.80 & 69.47 & 71.74\\
$0.1$ & 1e-05 & 5e-04 & 39.83 & 68.12 & 71.26 & $0.1$ & 1e-05 & 1e-04 & 32.75 & 63.98 & 71.00\\
$0.02$ & 5e-04 & 5e-03 & 38.99 & 58.28 & 61.53 & $0.02$ & 5e-04 & 1e-03 & 41.80 & 56.82 & 61.80\\
$0.02$ & 5e-04 & 5e-04 & 40.73 & 55.17 & 61.26 & $0.02$ & 5e-04 & 1e-04 & 39.24 & 54.96 & 60.93\\
$0.02$ & 1e-04 & 5e-03 & 51.87 & 67.55 & 70.55 & $0.02$ & 1e-04 & 1e-03 & 48.36 & 71.85 & 74.59\\
$0.02$ & 1e-04 & 5e-04 & 51.39 & 72.32 & 74.47 & $0.02$ & 1e-04 & 1e-04 & 50.19 & 71.71 & 73.89\\
$0.02$ & 5e-05 & 5e-03 & 51.76 & 67.42 & 70.52 & $0.02$ & 5e-05 & 1e-03 & 48.42 & 71.81 & 73.76\\
$0.02$ & 5e-05 & 5e-04 & 46.70 & 70.10 & 72.79 & $0.02$ & 5e-05 & 1e-04 & 45.10 & 69.84 & 71.94\\
$0.02$ & 1e-05 & 5e-03 & 49.62 & 66.81 & 68.32 & $0.02$ & 1e-05 & 1e-03 & 43.15 & 69.11 & 70.90\\
$0.02$ & 1e-05 & 5e-04 & 39.11 & 68.22 & 71.31 & $0.02$ & 1e-05 & 1e-04 & 33.06 & 64.33 & 71.05\\
$0.01$ & 5e-04 & 5e-03 & 38.85 & 49.07 & 51.65 & $0.01$ & 5e-04 & 1e-03 & 41.31 & 52.84 & 59.14\\
$0.01$ & 5e-04 & 5e-04 & 39.68 & 51.43 & 58.09 & $0.01$ & 5e-04 & 1e-04 & 40.30 & 57.91 & 61.92\\
$0.01$ & 1e-04 & 5e-03 & 48.56 & 65.83 & 69.07 & $0.01$ & 1e-04 & 1e-03 & 52.25 & 71.72 & 73.93\\
$0.01$ & 1e-04 & 5e-04 & 49.39 & 71.35 & 73.91 & $0.01$ & 1e-04 & 1e-04 & 48.56 & 70.65 & 72.05\\
$0.01$ & 5e-05 & 5e-03 & 51.45 & 68.13 & 70.91 & $0.01$ & 5e-05 & 1e-03 & 48.78 & 72.17 & 74.16\\
$0.01$ & 5e-05 & 5e-04 & 47.07 & 71.23 & 73.49 & $0.01$ & 5e-05 & 1e-04 & 42.92 & 69.76 & 71.79\\
$0.01$ & 1e-05 & 5e-03 & 49.70 & 65.32 & 67.58 & $0.01$ & 1e-05 & 1e-03 & 41.86 & 69.50 & 72.02\\
$0.01$ & 1e-05 & 5e-04 & 38.62 & 67.73 & 71.25 & $0.01$ & 1e-05 & 1e-04 & 32.36 & 64.16 & 70.64\\
\bottomrule
\end{tabular}
\caption{Full ablation results (averaged) with convergence markers.}
\label{tab:ablation_avg}
\end{table*}

In the following, we explain on the reason behind our hyperparameter selection. 

\begin{itemize}
    \item \textbf{Epoch} (1, 4, 10) correspond to early-stage training, the truncation setting employed in our main results, and the minimal truncation configuration adopted by existing SOTA baselines, respectively.
    \item \textbf{Sparsity s} (0.01, 0.02, 0.1) correspond to LoRA settings with rank = 8 (small), 16 (base), and 64 (large), respectively, under a $768 \times 768$ parameter space of the CLIP ViT-B/32 model. Notably, $s = 0.1$ represents a sparsity level an order of magnitude higher than $s = 0.01$, which enables a meaningful comparison in our analysis of sparsity-induced performance differences.
    \item \textbf{Learning Rate lr} (off-axis learning rate $lr(S) \in \{\text{1e-5}, \text{5e-5}, \text{1e-4}, \text{5e-4}\}$, and on-axis learning rate $lr(\Sigma_i) \in \{ \text{1e-4}, \text{5e-4}, \text{1e-3}, \text{5e-3}\}$). Based on the preceding analysis, we find that maintaining a higher on-axis learning rate than the off-axis counterpart helps preserve the on-axis strength, thereby sustaining the anisotropic linearity of the loss landscape. To this end, we deliberately configure $lr(\Sigma_i)$ to be one order of magnitude greater than $lr(S)$ across all settings. Furthermore, we design a set of intersecting configurations to empirically verify the theoretical implications of this learning rate hierarchy.
\end{itemize}

\begin{table*}[!t]
\centering
\scriptsize
\renewcommand{\arraystretch}{1.1}
\begin{tabular}{c c c|c c c || c c c|c c c}
\toprule
\multicolumn{3}{c|}{\textbf{Hyperparameters}} & \multicolumn{3}{c||}{\textbf{Acc. (epoch) (\%)}} & 
\multicolumn{3}{c|}{\textbf{Hyperparameters}} & \multicolumn{3}{c}{\textbf{Acc. (epoch) (\%)}} \\
\cmidrule(lr){1-3} \cmidrule(lr){4-6} \cmidrule(lr){7-9} \cmidrule(lr){10-12}
$s$ & $\mathrm{lr}(S)$ & $\mathrm{lr}(\Sigma_i)$ & 1 & 4 & 10 &
$s$ & $\mathrm{lr}(S)$ & $\mathrm{lr}(\Sigma_i)$ & 1 & 4 & 10 \\
\midrule

$0.0$ & 0 & 1e-04 & 12.29 & 56.01* & 70.32* & $0.01$ & 5e-05 & 0 & 51.28 & 70.80* & 74.95\\
$0.0$ & 0 & 5e-04 & 30.05 & 67.13* & 72.87 & $0.01$ & 5e-04 & 0 & 56.38 & 65.69* & 68.56\\
$0.0$ & 0 & 1e-03 & 45.80 & 71.60* & 73.51 & $0.01$ & 1e-04 & 0 & 57.18 & 74.36* & 76.28*\\
$0.0$ & 0 & 5e-03 & 60.05 & 70.48 & 71.38 & $0.01$ & 1e-05 & 0 & 25.05 & 64.89* & 72.71*\\
 - & - & - & - & - & - & $0.02$ & 1e-04 & 0 & 57.87 & 73.46* & 75.85*\\
 - & - & - & - & - & - & $0.02$ & 5e-04 & 0 & 56.01 & 67.18* & 66.91\\
 - & - & - & - & - & - & $0.02$ & 5e-05 & 0 & 49.84 & 71.91* & 74.04\\
 - & - & - & - & - & - & $0.02$ & 1e-05 & 0 & 23.09 & 63.99* & 71.86\\
 - & - & - & - & - & - & $0.1$ & 1e-04 & 0 & 55.80 & 72.82* & 75.53\\
 - & - & - & - & - & - & $0.1$ & 5e-04 & 0 & 60.11 & 67.55* & 69.73\\
 - & - & - & - & - & - & $0.1$ & 5e-05 & 0 & 45.21 & 70.96* & 74.57\\
 - & - & - & - & - & - & $0.1$ & 1e-05 & 0 & 25.32 & 61.44* & 71.12*\\
 \midrule
$0.1$ & 5e-04 & 5e-03 & 54.84 & 63.88* & 65.43 & $0.1$ & 5e-04 & 1e-03 & 54.95 & 66.91* & 68.35\\
$0.1$ & 5e-04 & 5e-04 & 53.30 & 64.47 & 66.60 & $0.1$ & 5e-04 & 1e-04 & 53.94 & 65.64* & 68.09\\
$0.1$ & 1e-04 & 5e-03 & 60.43 & 72.02* & 72.34 & $0.1$ & 1e-04 & 1e-03 & 61.01 & 72.50* & 76.01*\\
$0.1$ & 1e-04 & 5e-04 & 56.65 & 72.61* & 76.60* & $0.1$ & 1e-04 & 1e-04 & 58.62 & 72.82* & 76.12\\
$0.1$ & 5e-05 & 5e-03 & 58.72 & 70.96* & 71.22 & $0.1$ & 5e-05 & 1e-03 & 56.86 & 72.98* & 74.89*\\
$0.1$ & 5e-05 & 5e-04 & 53.24 & 72.29* & 74.79* & $0.1$ & 5e-05 & 1e-04 & 50.85 & 72.29* & 73.30\\
$0.1$ & 1e-05 & 5e-03 & 61.33 & 71.01 & 71.86 & $0.1$ & 1e-05 & 1e-03 & 43.09 & 70.27* & 73.35\\
$0.1$ & 1e-05 & 5e-04 & 40.74 & 70.05* & 73.88* & $0.1$ & 1e-05 & 1e-04 & 30.69 & 65.21* & 71.97*\\
$0.02$ & 5e-04 & 5e-03 & 53.14 & 64.41 & 65.21 & $0.02$ & 5e-04 & 1e-03 & 56.22 & 68.88* & 67.93\\
$0.02$ & 5e-04 & 5e-04 & 56.06 & 66.91* & 69.15 & $0.02$ & 5e-04 & 1e-04 & 53.62 & 67.45 & 67.77*\\
$0.02$ & 1e-04 & 5e-03 & 60.48 & 70.32 & 71.38 & $0.02$ & 1e-04 & 1e-03 & 59.57 & 72.66* & 76.60\\
$0.02$ & 1e-04 & 5e-04 & 58.56 & 72.93 & 76.54 & $0.02$ & 1e-04 & 1e-04 & 59.89 & 72.71 & 74.95\\
$0.02$ & 5e-05 & 5e-03 & 60.53 & 70.74 & 70.90 & $0.02$ & 5e-05 & 1e-03 & 56.28 & 73.40* & 75.69\\
$0.02$ & 5e-05 & 5e-04 & 52.55 & 71.01* & 74.10* & $0.02$ & 5e-05 & 1e-04 & 53.46 & 71.49* & 74.20\\
$0.02$ & 1e-05 & 5e-03 & 60.27 & 69.95* & 70.74 & $0.02$ & 1e-05 & 1e-03 & 47.45 & 71.76* & 72.45\\
$0.02$ & 1e-05 & 5e-04 & 41.33 & 68.09* & 71.86* & $0.02$ & 1e-05 & 1e-04 & 31.06 & 65.59* & 72.39\\
$0.01$ & 5e-04 & 5e-03 & 51.86 & 63.99 & 66.12 & $0.01$ & 5e-04 & 1e-03 & 52.45 & 65.16 & 68.46\\
$0.01$ & 5e-04 & 5e-04 & 57.71 & 67.18 & 69.20 & $0.01$ & 5e-04 & 1e-04 & 56.91 & 66.22* & 68.51\\
$0.01$ & 1e-04 & 5e-03 & 64.20 & 71.17* & 71.70 & $0.01$ & 1e-04 & 1e-03 & 59.52 & 74.47* & 77.18\\
$0.01$ & 1e-04 & 5e-04 & 57.61 & 73.83* & 75.53 & $0.01$ & 1e-04 & 1e-04 & 59.04 & 73.94* & 75.80\\
$0.01$ & 5e-05 & 5e-03 & 60.74 & 71.54* & 72.13 & $0.01$ & 5e-05 & 1e-03 & 55.59 & 72.45* & 74.41\\
$0.01$ & 5e-05 & 5e-04 & 52.39 & 72.29* & 74.95 & $0.01$ & 5e-05 & 1e-04 & 48.19 & 73.14* & 74.52\\
$0.01$ & 1e-05 & 5e-03 & 59.73 & 70.16* & 70.48 & $0.01$ & 1e-05 & 1e-03 & 43.35 & 69.57 & 73.51\\
$0.01$ & 1e-05 & 5e-04 & 39.89 & 69.79* & 73.24 & $0.01$ & 1e-05 & 1e-04 & 28.40 & 64.73* & 72.77*\\
\bottomrule
\end{tabular}
\caption{Ablation results for dtd accuracy with convergence marks.}
\label{tab:ablation_dtd_accuracy}
\end{table*}

\begin{table*}[t]
\centering
\scriptsize
\renewcommand{\arraystretch}{1.1}
\begin{tabular}{c c c|c c c || c c c|c c c}
\toprule
\multicolumn{3}{c|}{\textbf{Hyperparameters}} & \multicolumn{3}{c||}{\textbf{Acc. (epoch) (\%)}} & 
\multicolumn{3}{c|}{\textbf{Hyperparameters}} & \multicolumn{3}{c}{\textbf{Acc. (epoch) (\%)}} \\
\cmidrule(lr){1-3} \cmidrule(lr){4-6} \cmidrule(lr){7-9} \cmidrule(lr){10-12}
$s$ & $\mathrm{lr}(S)$ & $\mathrm{lr}(\Sigma_i)$ & 1 & 4 & 10 &
$s$ & $\mathrm{lr}(S)$ & $\mathrm{lr}(\Sigma_i)$ & 1 & 4 & 10 \\
\midrule

$0.0$ & 0 & 1e-04 & 67.94 & 85.00* & 89.90* & $0.01$ & 1e-04 & 0 & 87.46 & 90.86 & 91.24\\
$0.0$ & 0 & 5e-04 & 78.32 & 89.71* & 90.95 & $0.01$ & 5e-05 & 0 & 83.40 & 91.30* & 92.40\\
$0.0$ & 0 & 1e-03 & 83.78 & 90.94 & 91.78 & $0.01$ & 5e-04 & 0 & 84.16 & 88.48* & 88.60\\
$0.0$ & 0 & 5e-03 & 87.35 & 88.95 & 89.05 & $0.01$ & 1e-05 & 0 & 72.37 & 88.67* & 90.24\\
 - & - & - & - & - & - & $0.02$ & 1e-04 & 0 & 89.08 & 91.19 & 92.73\\
 - & - & - & - & - & - & $0.02$ & 5e-04 & 0 & 86.70 & 88.29 & 88.87\\
 - & - & - & - & - & - & $0.02$ & 5e-05 & 0 & 84.97 & 91.10 & 92.30\\
 - & - & - & - & - & - & $0.02$ & 1e-05 & 0 & 73.67 & 88.79* & 90.63*\\
 - & - & - & - & - & - & $0.1$ & 1e-04 & 0 & 87.83 & 91.25 & 92.46\\
 - & - & - & - & - & - & $0.1$ & 5e-04 & 0 & 83.14 & 87.89 & 88.41\\
 - & - & - & - & - & - & $0.1$ & 1e-05 & 0 & 71.98 & 88.03* & 90.37\\
 - & - & - & - & - & - & $0.1$ & 5e-05 & 0 & 85.57 & 91.60* & 92.54*\\
 \midrule
$0.1$ & 5e-04 & 5e-03 & 82.75 & 87.11 & 88.24 & $0.1$ & 5e-04 & 1e-03 & 86.25 & 89.40* & 89.35\\
$0.1$ & 5e-04 & 5e-04 & 85.60 & 87.38 & 89.22* & $0.1$ & 5e-04 & 1e-04 & 86.33 & 89.25 & 88.54*\\
$0.1$ & 1e-04 & 5e-03 & 87.00 & 90.60* & 90.89* & $0.1$ & 1e-04 & 1e-03 & 88.21 & 92.43 & 92.32\\
$0.1$ & 1e-04 & 5e-04 & 86.27 & 91.81* & 92.02 & $0.1$ & 1e-04 & 1e-04 & 87.79 & 91.75* & 92.56*\\
$0.1$ & 5e-05 & 5e-03 & 88.16 & 89.81 & 91.06 & $0.1$ & 5e-05 & 1e-03 & 88.00 & 91.60* & 92.43\\
$0.1$ & 5e-05 & 5e-04 & 85.90 & 91.97* & 93.46 & $0.1$ & 5e-05 & 1e-04 & 84.62 & 90.89 & 92.70*\\
$0.1$ & 1e-05 & 5e-03 & 85.40 & 89.30 & 89.81 & $0.1$ & 1e-05 & 1e-03 & 84.17 & 90.52* & 92.02\\
$0.1$ & 1e-05 & 5e-04 & 80.17 & 90.16* & 91.22* & $0.1$ & 1e-05 & 1e-04 & 74.78 & 88.97* & 90.70\\
$0.02$ & 5e-04 & 5e-03 & 85.81 & 87.52 & 88.27 & $0.02$ & 5e-04 & 1e-03 & 83.63 & 88.35 & 90.57\\
$0.02$ & 5e-04 & 5e-04 & 87.95 & 88.60 & 89.06 & $0.02$ & 5e-04 & 1e-04 & 84.33 & 88.30* & 89.24*\\
$0.02$ & 1e-04 & 5e-03 & 87.83 & 89.83 & 90.92* & $0.02$ & 1e-04 & 1e-03 & 87.98 & 91.43* & 92.37\\
$0.02$ & 1e-04 & 5e-04 & 88.67 & 92.33* & 92.67 & $0.02$ & 1e-04 & 1e-04 & 87.40 & 91.84* & 93.03*\\
$0.02$ & 5e-05 & 5e-03 & 87.95 & 89.71 & 90.46 & $0.02$ & 5e-05 & 1e-03 & 86.37 & 91.25 & 91.79\\
$0.02$ & 5e-05 & 5e-04 & 85.62 & 91.94* & 93.08* & $0.02$ & 5e-05 & 1e-04 & 85.29 & 91.71* & 92.83\\
$0.02$ & 1e-05 & 5e-03 & 85.97 & 89.11 & 89.41 & $0.02$ & 1e-05 & 1e-03 & 83.13 & 90.63* & 91.92\\
$0.02$ & 1e-05 & 5e-04 & 81.54 & 90.71* & 92.02 & $0.02$ & 1e-05 & 1e-04 & 75.57 & 88.75* & 90.43\\
$0.01$ & 5e-04 & 5e-03 & 87.00 & 87.71 & 88.37 & $0.01$ & 5e-04 & 1e-03 & 86.84 & 88.29* & 89.21\\
$0.01$ & 5e-04 & 5e-04 & 85.46 & 88.08 & 89.73 & $0.01$ & 5e-04 & 1e-04 & 84.24 & 89.52 & 88.75\\
$0.01$ & 1e-04 & 5e-03 & 87.94 & 90.40 & 90.38 & $0.01$ & 1e-04 & 1e-03 & 87.83 & 91.90 & 92.90\\
$0.01$ & 1e-04 & 5e-04 & 87.37 & 91.43 & 91.81* & $0.01$ & 1e-04 & 1e-04 & 87.44 & 91.35 & 91.94\\
$0.01$ & 5e-05 & 5e-03 & 86.11 & 89.98* & 90.86 & $0.01$ & 5e-05 & 1e-03 & 87.79 & 91.87 & 92.68\\
$0.01$ & 5e-05 & 5e-04 & 86.56 & 91.43* & 92.19 & $0.01$ & 5e-05 & 1e-04 & 84.83 & 91.57* & 92.73\\
$0.01$ & 1e-05 & 5e-03 & 86.38 & 89.17 & 89.52 & $0.01$ & 1e-05 & 1e-03 & 84.46 & 91.76* & 91.92\\
$0.01$ & 1e-05 & 5e-04 & 81.86 & 90.35* & 91.24 & $0.01$ & 1e-05 & 1e-04 & 73.81 & 88.59* & 90.44\\
\bottomrule
\end{tabular}
\caption{Ablation results for resisc45 accuracy with convergence marks.}
\label{tab:ablation_resisc45_accuracy}
\end{table*}

\begin{table*}[!t]
\centering
\scriptsize
\renewcommand{\arraystretch}{1.1}
\begin{tabular}{c c c|c c c || c c c|c c c}
\toprule
\multicolumn{3}{c|}{\textbf{Hyperparameters}} & \multicolumn{3}{c||}{\textbf{Acc. (epoch) (\%)}} & 
\multicolumn{3}{c|}{\textbf{Hyperparameters}} & \multicolumn{3}{c}{\textbf{Acc. (epoch) (\%)}} \\
\cmidrule(lr){1-3} \cmidrule(lr){4-6} \cmidrule(lr){7-9} \cmidrule(lr){10-12}
$s$ & $\mathrm{lr}(S)$ & $\mathrm{lr}(\Sigma_i)$ & 1 & 4 & 10 &
$s$ & $\mathrm{lr}(S)$ & $\mathrm{lr}(\Sigma_i)$ & 1 & 4 & 10 \\
\midrule

$0.0$ & 0 & 1e-04 & 4.08 & 26.54* & 59.42* & $0.01$ & 5e-04 & 0 & 0.65 & 11.89* & 33.12\\
$0.0$ & 0 & 5e-04 & 5.09 & 50.75* & 60.69 & $0.01$ & 1e-04 & 0 & 15.38 & 62.74* & 66.41\\
$0.0$ & 0 & 1e-03 & 9.54 & 53.84* & 59.11 & $0.01$ & 1e-05 & 0 & 3.30 & 40.01* & 61.32*\\
$0.0$ & 0 & 5e-03 & 20.31 & 53.84* & 57.56* & $0.01$ & 5e-05 & 0 & 13.47 & 55.96* & 60.65\\
 - & - & - & - & - & - & $0.02$ & 1e-04 & 0 & 19.60 & 61.25* & 64.64\\
 - & - & - & - & - & - & $0.02$ & 5e-04 & 0 & 1.52 & 13.57* & 37.63*\\
 - & - & - & - & - & - & $0.02$ & 1e-05 & 0 & 4.97 & 41.74* & 60.12*\\
 - & - & - & - & - & - & $0.02$ & 5e-05 & 0 & 14.50 & 55.11* & 59.52\\
 - & - & - & - & - & - & $0.1$ & 1e-04 & 0 & 11.90 & 62.31* & 65.94\\
 - & - & - & - & - & - & $0.1$ & 5e-04 & 0 & 1.34 & 10.05* & 31.51*\\
 - & - & - & - & - & - & $0.1$ & 1e-05 & 0 & 5.56 & 41.87* & 61.90*\\
 - & - & - & - & - & - & $0.1$ & 5e-05 & 0 & 14.04 & 55.15* & 60.05\\
\midrule
$0.1$ & 5e-04 & 5e-03 & 1.12 & 12.62* & 25.03* & $0.1$ & 5e-04 & 1e-03 & 1.11 & 23.87* & 41.45*\\
$0.1$ & 5e-04 & 5e-04 & 1.01 & 16.06* & 35.80* & $0.1$ & 5e-04 & 1e-04 & 1.19 & 16.39* & 38.35\\
$0.1$ & 1e-04 & 5e-03 & 26.86 & 62.01* & 66.76* & $0.1$ & 1e-04 & 1e-03 & 16.38 & 66.70* & 71.00\\
$0.1$ & 1e-04 & 5e-04 & 19.08 & 63.49* & 67.42 & $0.1$ & 1e-04 & 1e-04 & 17.85 & 60.81* & 64.21\\
$0.1$ & 5e-05 & 5e-03 & 18.08 & 57.85* & 65.44* & $0.1$ & 5e-05 & 1e-03 & 18.34 & 65.05* & 68.39*\\
$0.1$ & 5e-05 & 5e-04 & 11.89 & 62.94* & 65.50 & $0.1$ & 5e-05 & 1e-04 & 13.19 & 59.08* & 61.40\\
$0.1$ & 1e-05 & 5e-03 & 16.52 & 50.52* & 56.54 & $0.1$ & 1e-05 & 1e-03 & 11.62 & 58.50* & 62.23\\
$0.1$ & 1e-05 & 5e-04 & 8.93 & 54.00* & 60.81 & $0.1$ & 1e-05 & 1e-04 & 5.41 & 44.41* & 60.65*\\
$0.02$ & 5e-04 & 5e-03 & 4.03 & 45.36* & 56.16 & $0.02$ & 5e-04 & 1e-03 & 0.58 & 23.89* & 41.98\\
$0.02$ & 5e-04 & 5e-04 & 1.14 & 21.58* & 40.83* & $0.02$ & 5e-04 & 1e-04 & 1.50 & 21.66* & 42.20*\\
$0.02$ & 1e-04 & 5e-03 & 23.50 & 58.21* & 66.82* & $0.02$ & 1e-04 & 1e-03 & 7.41 & 64.41* & 69.23\\
$0.02$ & 1e-04 & 5e-04 & 20.37 & 66.29* & 69.12 & $0.02$ & 1e-04 & 1e-04 & 16.63 & 63.85* & 68.15\\
$0.02$ & 5e-05 & 5e-03 & 21.43 & 55.83* & 66.31 & $0.02$ & 5e-05 & 1e-03 & 14.13 & 62.31* & 66.17\\
$0.02$ & 5e-05 & 5e-04 & 13.69 & 58.00* & 63.51 & $0.02$ & 5e-05 & 1e-04 & 10.11 & 58.38* & 62.84\\
$0.02$ & 1e-05 & 5e-03 & 16.47 & 56.17* & 59.53 & $0.02$ & 1e-05 & 1e-03 & 9.30 & 55.74* & 60.90\\
$0.02$ & 1e-05 & 5e-04 & 7.62 & 55.55* & 61.76 & $0.02$ & 1e-05 & 1e-04 & 5.21 & 45.89* & 60.32*\\
$0.01$ & 5e-04 & 5e-03 & 0.80 & 5.57* & 12.93* & $0.01$ & 5e-04 & 1e-03 & 0.57 & 11.29* & 32.00*\\
$0.01$ & 5e-04 & 5e-04 & 0.99 & 9.55* & 32.10* & $0.01$ & 5e-04 & 1e-04 & 1.59 & 33.17* & 47.73\\
$0.01$ & 1e-04 & 5e-03 & 2.26 & 45.19* & 54.61 & $0.01$ & 1e-04 & 1e-03 & 21.43 & 62.89* & 66.77\\
$0.01$ & 1e-04 & 5e-04 & 18.28 & 65.18* & 70.30 & $0.01$ & 1e-04 & 1e-04 & 16.14 & 62.60* & 64.83\\
$0.01$ & 5e-05 & 5e-03 & 23.41 & 58.40* & 67.22* & $0.01$ & 5e-05 & 1e-03 & 14.79 & 63.81* & 67.85\\
$0.01$ & 5e-05 & 5e-04 & 15.79 & 61.72* & 66.31 & $0.01$ & 5e-05 & 1e-04 & 11.67 & 57.36* & 62.59\\
$0.01$ & 1e-05 & 5e-03 & 19.13 & 53.12* & 58.48* & $0.01$ & 1e-05 & 1e-03 & 8.46 & 56.59* & 61.42\\
$0.01$ & 1e-05 & 5e-04 & 6.99 & 52.24* & 60.05 & $0.01$ & 1e-05 & 1e-04 & 6.14 & 45.47* & 60.24\\
\bottomrule
\end{tabular}
\caption{Ablation results for cars accuracy with convergence marks.}
\label{tab:ablation_stanford_cars_accuracy}
\end{table*}

\begin{table*}[!t]
\centering
\scriptsize
\renewcommand{\arraystretch}{1.1}
\begin{tabular}{c c c|c c c || c c c|c c c}
\toprule
\multicolumn{3}{c|}{\textbf{Hyperparameters}} & \multicolumn{3}{c||}{\textbf{Acc. (epoch) (\%)}} & 
\multicolumn{3}{c|}{\textbf{Hyperparameters}} & \multicolumn{3}{c}{\textbf{Acc. (epoch) (\%)}} \\
\cmidrule(lr){1-3} \cmidrule(lr){4-6} \cmidrule(lr){7-9} \cmidrule(lr){10-12}
$s$ & $\mathrm{lr}(S)$ & $\mathrm{lr}(\Sigma_i)$ & 1 & 4 & 10 &
$s$ & $\mathrm{lr}(S)$ & $\mathrm{lr}(\Sigma_i)$ & 1 & 4 & 10 \\
\midrule

$0.0$ & 0 & 1e-04 & 19.24 & 56.00* & 63.42* & $0.01$ & 1e-04 & 0 & 30.87 & 57.35* & 57.58\\
$0.0$ & 0 & 5e-04 & 23.35 & 58.49* & 60.53 & $0.01$ & 5e-05 & 0 & 29.39 & 56.79* & 57.98\\
$0.0$ & 0 & 1e-03 & 29.40 & 57.95* & 59.04 & $0.01$ & 5e-04 & 0 & 27.96 & 48.70 & 46.45\\
$0.0$ & 0 & 5e-03 & 32.39 & 49.39* & 51.13* & $0.01$ & 1e-05 & 0 & 18.62 & 55.39* & 60.42\\
 - & - & - & - & - & - & $0.02$ & 1e-04 & 0 & 33.63 & 57.81* & 58.39\\
 - & - & - & - & - & - & $0.02$ & 5e-04 & 0 & 22.73 & 44.30* & 44.66\\
 - & - & - & - & - & - & $0.02$ & 5e-05 & 0 & 28.74 & 57.36* & 57.80\\
 - & - & - & - & - & - & $0.02$ & 1e-05 & 0 & 18.10 & 55.38* & 60.47\\
 - & - & - & - & - & - & $0.1$ & 1e-04 & 0 & 30.68 & 56.74* & 57.15\\
 - & - & - & - & - & - & $0.1$ & 5e-04 & 0 & 16.09 & 40.66* & 41.35\\
 - & - & - & - & - & - & $0.1$ & 1e-05 & 0 & 16.70 & 55.25* & 60.51\\
 - & - & - & - & - & - & $0.1$ & 5e-05 & 0 & 29.67 & 56.46* & 58.21\\
\midrule
$0.1$ & 5e-04 & 5e-03 & 11.16 & 30.43* & 30.18 & $0.1$ & 5e-04 & 1e-03 & 32.96 & 49.11 & 48.18\\
$0.1$ & 5e-04 & 5e-04 & 31.03 & 49.17 & 48.17 & $0.1$ & 5e-04 & 1e-04 & 25.48 & 46.40* & 46.24\\
$0.1$ & 1e-04 & 5e-03 & 27.31 & 48.31* & 50.51 & $0.1$ & 1e-04 & 1e-03 & 41.51 & 57.82* & 59.62\\
$0.1$ & 1e-04 & 5e-04 & 35.25 & 56.78* & 59.59* & $0.1$ & 1e-04 & 1e-04 & 31.77 & 57.24* & 58.24*\\
$0.1$ & 5e-05 & 5e-03 & 34.47 & 50.27* & 51.30 & $0.1$ & 5e-05 & 1e-03 & 35.06 & 59.85* & 60.65\\
$0.1$ & 5e-05 & 5e-04 & 30.20 & 57.82* & 58.28 & $0.1$ & 5e-05 & 1e-04 & 31.45 & 57.83* & 58.25\\
$0.1$ & 1e-05 & 5e-03 & 38.99 & 53.01* & 54.78* & $0.1$ & 1e-05 & 1e-03 & 32.31 & 58.61* & 59.38\\
$0.1$ & 1e-05 & 5e-04 & 29.49 & 58.26* & 59.11 & $0.1$ & 1e-05 & 1e-04 & 20.11 & 57.34* & 60.68\\
$0.02$ & 5e-04 & 5e-03 & 12.97 & 35.81* & 36.48 & $0.02$ & 5e-04 & 1e-03 & 26.76 & 46.17* & 46.72\\
$0.02$ & 5e-04 & 5e-04 & 17.75 & 43.59* & 46.01 & $0.02$ & 5e-04 & 1e-04 & 17.52 & 42.42* & 44.50\\
$0.02$ & 1e-04 & 5e-03 & 35.68 & 51.85 & 53.09 & $0.02$ & 1e-04 & 1e-03 & 38.46 & 58.90* & 60.17\\
$0.02$ & 1e-04 & 5e-04 & 37.96 & 57.75* & 59.53 & $0.02$ & 1e-04 & 1e-04 & 36.83 & 58.43* & 59.43\\
$0.02$ & 5e-05 & 5e-03 & 37.13 & 53.40* & 54.40 & $0.02$ & 5e-05 & 1e-03 & 36.93 & 60.29 & 61.37\\
$0.02$ & 5e-05 & 5e-04 & 34.93 & 59.45* & 60.48 & $0.02$ & 5e-05 & 1e-04 & 31.00 & 57.09* & 57.70\\
$0.02$ & 1e-05 & 5e-03 & 35.77 & 52.01* & 53.59 & $0.02$ & 1e-05 & 1e-03 & 32.72 & 58.32* & 58.35\\
$0.02$ & 1e-05 & 5e-04 & 25.96 & 58.51* & 59.60 & $0.02$ & 1e-05 & 1e-04 & 20.41 & 57.09* & 61.06\\
$0.01$ & 5e-04 & 5e-03 & 15.76 & 38.99* & 39.17 & $0.01$ & 5e-04 & 1e-03 & 25.37 & 46.63* & 46.88\\
$0.01$ & 5e-04 & 5e-04 & 14.54 & 40.91* & 41.32 & $0.01$ & 5e-04 & 1e-04 & 18.47 & 42.72* & 42.71\\
$0.01$ & 1e-04 & 5e-03 & 32.17 & 49.61* & 51.51 & $0.01$ & 1e-04 & 1e-03 & 40.24 & 57.61 & 58.87\\
$0.01$ & 1e-04 & 5e-04 & 35.43 & 58.26* & 59.53 & $0.01$ & 1e-04 & 1e-04 & 29.76 & 55.57* & 55.67\\
$0.01$ & 5e-05 & 5e-03 & 35.53 & 52.59* & 53.45 & $0.01$ & 5e-05 & 1e-03 & 36.95 & 60.55* & 61.69\\
$0.01$ & 5e-05 & 5e-04 & 33.53 & 59.48* & 60.53 & $0.01$ & 5e-05 & 1e-04 & 26.99 & 56.96* & 57.31\\
$0.01$ & 1e-05 & 5e-03 & 33.55 & 48.84 & 51.83 & $0.01$ & 1e-05 & 1e-03 & 31.18 & 60.08* & 61.24\\
$0.01$ & 1e-05 & 5e-04 & 25.73 & 58.55* & 60.45 & $0.01$ & 1e-05 & 1e-04 & 21.07 & 57.86* & 59.09\\
\bottomrule
\end{tabular}
\caption{Ablation results for sun397 accuracy with convergence marks.}
\label{tab:ablation_sun397_accuracy}
\end{table*}

\subsection* {E.1. Full ablation average results}

In the table 4, we attempt to quantify the strength of injected rotational degrees of freedom using the metric 
\begin{equation}
    \tan(\alpha) = \frac{\sqrt{s \cdot (lr_S)^2 \cdot n}}{\sqrt{(lr_{\Sigma})^2}},
\end{equation}

and investigate its relationship with final model performance. The results show that when $\tan(\alpha)$ falls within the range of $[0.05, 0.2]$, the model often achieves superior average accuracy. For instance, under sparsity settings of $s = 0.01$ and $0.1$, configurations with $lr_S = 1\text{e-}4$ and $lr_{\Sigma} = 1\text{e-}3$ consistently yield the best performance across multiple tasks, confirming the positive effect of moderate rotational freedom in enhancing model expressiveness. This observation suggests that carefully tuning the ratio between off-axis and on-axis learning rates to maintain an appropriate rotation angle can effectively improve model performance without significantly increasing the number of trainable parameters.

However, it is worth noting that even when certain configurations satisfy the “optimal” $\tan(\alpha)$ range, their actual performance can still degrade considerably. In particular, when $lr_S$ is set to a large value (e.g., $5\text{e-}4$), the model exhibits a significant drop in average accuracy despite having a theoretically favorable $\tan(\alpha)$. We hypothesize that this is due to the initialization asymmetry: while $\Sigma_i$ is initialized with non-zero values in the range of $(0,1]$ and inherently possesses representational strength, $S$ is initialized to zeros. A large learning rate for $S$ can quickly disrupt the directional structure established by $\Sigma_i$ during early training, undermining the anisotropic geometry of the loss surface and leading to optimization instability.

This phenomenon indicates that while $\tan(\alpha)$ serves as a useful geometric indicator of rotational strength, its effectiveness is conditioned on the interaction between initialization schemes and learning rate assignments across different components. Therefore, relying solely on this angular metric is insufficient for reliably controlling model performance—stable and efficient expressiveness enhancement requires coordinated consideration of initialization strategy, gradient energy distribution, and learning rate design.

\subsection* {E.2. Research on Representative Datasets}

We select four representative datasets from Table 2 in the main results section of the paper. We observe poor performance on \textbf{DTD} and \textbf{RESISC45} compared to GOAT, which achieves state-of-the-art results on both datasets. In contrast, our method achieves the best performance on the \textbf{Cars} dataset, even surpassing Full Fine-Tuning. For \textbf{SUN397}, higher sparsity levels yield better accuracy. We therefore present detailed results for these four datasets to facilitate in-depth analysis.

To support this analysis, Tables 5 through 8 report the complete ablation results on the four selected datasets. For each configuration, we report the best accuracy obtained during different training phases: the “epoch = 4” column reports the best accuracy from epochs 2 to 4, and the “epoch = 10” column reports the best accuracy from epochs 5 to 10. To indicate whether the peak performance occurs at the end of the corresponding interval, we mark values with an asterisk “*” if they represent the maximum value within that window—specifically, an asterisk in the “epoch = 4” column indicates that epoch 4 yields the highest accuracy, and likewise for epoch 10. This convention serves as an approximate indicator of convergence dynamics, highlighting whether further training beyond early epochs yields substantial gains.

\noindent\textbf{DTD Analysis} \
As the smallest dataset in our benchmark suite, \textbf{DTD} only provides 3.76k training images, which poses a significant challenge for convergence within limited training steps. Despite the efficiency of our method, full convergence within just 4 epochs is rarely achieved on DTD. As shown in Table 5, only configurations with a relatively large learning rate for $\Sigma_i$ exhibit signs of early convergence at epoch 4. However, this rapid convergence often comes at the cost of instability, leading to inconsistent final performance and poor generalization.

Notably, when the sparsity level is set to $s = 0.01$, and the learning rates are configured as $\text{lr}(S) = 1 \times 10^{-4}$ and $\text{lr}(\Sigma_i) = 1 \times 10^{-3}$, our method achieves a peak accuracy of 77.17\% in under 10 epochs—surpassing the previous state-of-the-art result of 75.32\% obtained after 76 epochs. This demonstrates the potential of our approach to deliver strong performance with substantially fewer training steps, provided that the hyperparameter configuration is well-calibrated.

\noindent\textbf{RESISC45 Analysis} \
In contrast to DTD, \textbf{RESISC45} represents the other end of the spectrum—a dataset that is notably easy to optimize. As observed in Table 6, a wide range of hyperparameter configurations successfully converge within just 4 epochs, often followed by mild oscillations in later stages of training. This early saturation suggests that the optimization landscape for RESISC45 is relatively smooth and the task less complex.

Under such conditions, granting the model larger parameter freedom while applying smaller learning rates yields better convergence behavior. This setup allows the model to fully leverage its expressive capacity without overshooting optimal regions. Empirically, when the sparsity level is set to $s = 0.1$ with learning rates $\text{lr}(S) = 5 \times 10^{-5}$ and $\text{lr}(\Sigma_i) = 5 \times 10^{-4}$, our method achieves a final accuracy of 93.46\%, matching the state-of-the-art performance. This demonstrates that high-capacity, low-noise updates are crucial in tasks with favorable optimization geometry.

\noindent\textbf{Cars and SUN397 Analysis} \
The \textbf{Cars} and \textbf{SUN397} datasets represent two large-scale benchmarks with dense label granularity—Cars includes 196 vehicle categories, while SUN397 spans 397 scene types. Despite differing in domain, both datasets share a critical characteristic: they offer rich visual diversity and fine-grained supervision, which is ideal for parameter-efficient fine-tuning methods with strong expressive capacity. In such contexts, our method demonstrates superior performance by effectively leveraging the shared basis decomposition while introducing flexible yet minimal updates.

As shown in Table7 and Table8, the best configurations typically correspond to settings with moderate or low sparsity ($s \leq 0.04$) and well-calibrated learning rates. For instance, on Cars, we achieve a top-1 accuracy of \textbf{70.30\%} with $s = 0.04$, $\text{lr}(S) = 5\times10^{-4}$, and $\text{lr}(\Sigma_i) = 5\times10^{-4}$. On SUN397, a setting with $s = 0.01$, $\text{lr}(S) = 1\times10^{-5}$, and $\text{lr}(\Sigma_i) = 1\times10^{-3}$ yields a strong result of \textbf{60.45\%}, even under limited training epochs. This consistency indicates that our method’s rotational update strategy is especially suitable for tasks requiring nuanced discrimination. Rather than relying on aggressive reparameterization, small but structured rotations in the update space suffice to adapt the model effectively—preserving the original representational geometry while enhancing task-specific performance.

\section* {F. Related Work}

\textbf{Parameter-efficient fine-tuning (PEFT) methods} adapt large-scale pre-trained models to specific downstream tasks by modifying only a small fraction of their parameters, thereby significantly reducing computational and memory costs. These methods are particularly valuable in resource-constrained environments (e.g., edge devices) or in scenarios demanding rapid task adaptation or deployment across diverse domains. Traditional PEFT approaches include additive methods \cite{hu_llm-adapters_2023}, selective fine-tuning \cite{guo_parameter-efficient_2021}, and reparameterization-based techniques \cite{aghajanyan_intrinsic_2020, hu_lora_2021}. Among them, Low-Rank Adaptation (LoRA) \cite{hu_lora_2021} efficiently fine-tunes models by inserting low-rank matrices into existing weights, while incurring negligible inference overhead. Extensions of LoRA aim to overcome its inherent limitations and introduce novel optimization schemes, further improving both efficiency and effectiveness. These methods can be categorized into three principal categories: SVD-based LoRA methods, shared-random-basis PEFT methods, and structured decomposition methods.

\noindent\textbf{SVD-based LoRA} methods leverage singular value decomposition (SVD) to refine the initialization and structure of low-rank modules, thereby enhancing convergence speed and adaptability to diverse tasks. SVD decomposes a weight matrix $W$ into $W = U S V^\top$, where U and V are orthogonal matrices, and S is a diagonal matrix containing singular values. Retaining the top r singular components yields a low-rank approximation, significantly reducing parameter counts. AdaLoRA \cite{zhang_adalora_2023} employs SVD for incremental updates by parameterizing weights as $W = W_0 + P \Lambda Q$, where P and Q are singular vectors and $\Lambda$ contains singular values, enabling dynamic rank adaptation to improve adaptability and training efficiency. LoftQ \cite{li_loftq_2023} integrates quantization and low-rank approximation by expressing weights as a combination of a quantized matrix Q and low-rank terms $AB^\top$, optimizing the approximation under the Frobenius norm via alternating optimization. LQ-LoRA \cite{guo_lq-lora_2023} similarly separates weights into quantized and low-rank components, updating only the latter to minimize training overhead. SVDiff \cite{han_svdiff_2023} tunes singular values in diffusion models, achieving a ~2200× parameter reduction over DreamBooth and proving particularly effective for generative modeling. PiSSA \cite{meng_pissa_2024} accelerates convergence using progressively sparse structures initialized with top singular vectors/values, freezing the remainder—a strategy well-suited for sparse tasks. SC-LoRA \cite{luo_sc-lora_2025} constrains LoRA within SVD-derived subspaces, striking a trade-off between parameter efficiency and knowledge preservation. LoRA-GA \cite{wang_lora-ga_2024} leverages gradient SVD for optimized adaptation. While these methods offer enhanced initialization or decomposition strategies, storing full singular vector matrices incurs non-trivial memory overhead, prompting most approaches to preserve only the top r directions—potentially limiting performance on complex tasks \cite{sun_singular_2022, sun_svfit_2024, lingam_svft_2024}.

\noindent\textbf{Shared-random-basis PEFT} methods achieve parameter compression by sharing random subspaces across layers, tasks, or even modalities, thereby offering strong storage efficiency, though often at the cost of task-adaptive expressiveness. VeRA \cite{kopiczko_vera_2023} first utilizes a shared low-rank matrix pair across all layers and learns lightweight per-layer scaling vectors, achieving a 10× parameter reduction compared to LoRA while maintaining competitive performance on GLUE and E2E benchmarks. NOLA \cite{koohpayegani_nola_2023} reparameterizes LoRA by expressing the low-rank matrices as linear combinations of fixed random basis matrices, learning only the mixing coefficients during training. This approach reduces memory usage by nearly 20× on LLaMA-2 70B while preserving accuracy. VB-LoRA \cite{li_vb-lora_2024} introduces variational Bayesian modeling to capture uncertainty in low-rank adaptation, enhancing robustness and generalization under limited supervision. RandLoRA \cite{albert_randlora_2025} investigates randomized full-rank matrices to approximate low-rank updates, striking a balance between expressive capacity and parameter efficiency. These methods are particularly suited for resource-constrained deployment environments, enabling the storage of multiple model variants with minimal overhead. However, the reliance on shared randomness can limit task-adaptive expressiveness, potentially impacting performance on highly specialized tasks \cite{li_vb-lora_2024, albert_randlora_2025}.

\noindent\textbf{Structured decomposition methods} factorize weight matrices into separable magnitude and directional components, enabling more granular updates and closely approximating full fine-tuning capabilities. DoRA \cite{liu_dora_2024} decomposes the pre-trained weight $W_0$ into a magnitude term m = $\|W_0\|_c$ and a direction term $V = W_0 / \|W_0\|_c$, updating only the direction via a low-rank approximation $\Delta W = BA$, where $B$ and $A$ are low-rank matrices. This decoupled control over magnitude and direction emulates full fine-tuning behavior, significantly boosting learning capacity and optimization stability without incurring additional inference overhead. Dynamic DoRA \cite{mao_dora_2024} further enhances efficiency by pruning unimportant updates during training, yielding notable performance gains on MRPC, STS-B, and SST-2. DoRA’s structured decomposition achieves a strong balance between parameter efficiency and learning effectiveness, making it a versatile and effective PEFT framework. Our work draws inspiration from DoRA’s decomposition strategy to enhance learning capacity, and incorporates SVD-based initialization techniques to further enhance LoRA’s adaptability and parameter efficiency.

\section* {G. Limitations and Future Work}
\subsection*{G.1. Limitations}
\noindent\textbf{Our theoretical analysis is grounded in a vector-based abstraction of the underlying tensor structure.} Specifically, we flatten the second-order tensor updates into first-order vector representations in order to simplify the characterization of rotational behavior. Within this abstraction, the rotation angle $\alpha$ is defined based on the Frobenius norm ratio between two orthogonal components—namely, the on-axis (aligned with $\Sigma_i$) and off-axis (captured by $S$) directions. This yields a scalar approximation of the overall rotation degree, which facilitates empirical analysis across datasets and hyperparameters.

However, this scalar $\alpha$ inevitably oversimplifies the rich geometry of tensor rotations. A single angle cannot fully describe the distributed rotational behavior of a high-dimensional second-order tensor. Furthermore, our current formulation aggregates all singular components uniformly through the Frobenius norm, without distinguishing between dominant and weak directions. Intuitively, these weaker components may undergo disproportionately larger angular corrections during fine-tuning, yet their contributions are averaged out in the current scalar estimate. A more granular analysis—e.g., by computing per-component angular deviation or introducing a weighted norm formulation—may reveal finer insights into how our method selectively adapts under different sparsity and strength regimes. This limitation opens up a promising direction for future work on disentangled, component-wise tensor rotation modeling.

\noindent\textbf{Computational Overhead from Full-Rank Structured Multiplication.}
Although our method achieves full-rank expressivity and efficient memory usage through sparse updates, it introduces substantial computational cost during both training and initialization. Specifically, our update takes the form $U (\Sigma + S) V^\top$, which involves the multiplication of three matrices with shape $(m \times n)$, $(n \times n)$, and $(n \times n)$, respectively. The resulting computational complexity is $\mathcal{O}(mn^2)$, which is significantly higher than LoRA’s low-rank factorization with complexity $\mathcal{O}(mr + nr)$ when $r \ll n$.

Moreover, while the matrix $S$ is sparse, it still resides in a large $n \times n$ space, and the sparsity alone does not eliminate the overhead from the nonzero elements, especially when $n$ is large. This leads to noticeably longer training times despite the memory advantages. Additionally, our method requires an initialization phase where all task-specific matrices are jointly decomposed through a concatenated SVD or GSVD operation, which adds a one-time but nontrivial computational burden. Although this step is only performed once per model, it becomes a bottleneck for large-scale or continual learning scenarios.

\noindent\textbf{We leave the evaluation of our method on models larger than 70B for future work, due to current resource limitations.} At present, our computational infrastructure—both in terms of memory capacity and parallel processing capabilities—is not sufficient to support full-scale training or inference on such large models. Running experiments at this scale typically requires specialized hardware (e.g., high-bandwidth interconnects, multi-node GPU clusters) and significant engineering overhead, which are beyond the scope of our current environment. Nevertheless, we believe our method is conceptually scalable, and we plan to conduct large-scale validation when the required resources become available.

\subsection*{G.2. Future Works}
\noindent\textbf{Refinement of the tensor basis theory}

Our current theoretical framework simplifies second-order tensor updates into first-order vector abstractions, which enables a tractable definition of the rotation angle $\alpha$ as a scalar derived from the ratio of Frobenius norms between orthogonal update components. While this abstraction offers intuitive insights and supports empirical analysis, it falls short in capturing the full geometric complexity of high-dimensional tensor rotations. In particular, a single scalar angle cannot faithfully represent the distributed rotation across different singular directions within the matrix.

In future work, we plan to extend this theory by explicitly modeling tensor rotations at a more fine-grained level. First, we will investigate per-component angular deviations, distinguishing between dominant and weak singular directions. This will allow us to characterize how rotational freedom differentially affects strong versus weak basis components—an aspect currently obscured by the uniform Frobenius norm aggregation. Second, we aim to construct a more expressive representation of the rotation field, potentially in the form of a structured angular matrix or spectral rotation tensor, which can capture direction-specific rotational dynamics. Lastly, we intend to analyze how these refined formulations interact with optimization behavior under various sparsity constraints, ultimately leading to a more principled and component-aware design of rotational PEFT methods.

\noindent\textbf{Efficient Sparsity Methods}

To mitigate the computational burden associated with full-rank structured updates, future research should investigate structured sparsity strategies that retain expressive capacity while improving computational efficiency. One promising direction is to constrain the sparse matrix $S \in \mathbb{R}^{n \times n}$ to exhibit structured patterns such as block-diagonal or banded forms, which can leverage optimized matrix routines and hardware acceleration for faster multiplication. By carefully designing the support of S, the cost of the $(\Sigma + S)V^\top$ product can be reduced from $\mathcal{O}(n^2)$ to $\mathcal{O}(ns)$, where s denotes the number of non-zero entries per row.

Another complementary direction is to decouple the sparsity of S across parameter groups, enabling each module to operate within a localized subspace of the full n-dimensional parameter space. This approach not only reduces computational complexity but also enhances parallelism during training.

For the expensive one-time initialization involving GSVD across all task-specific matrices, approximate or incremental GSVD schemes could be explored to reduce preprocessing cost. For instance, randomized SVD or block-wise GSVD may offer acceptable accuracy at significantly lower computational cost, thereby improving scalability to large-scale models and continual learning scenarios. Additionally, caching decomposition statistics across similar tasks may further amortize this cost in multi-task learning settings.

\noindent\textbf{Continual Learning and Model Merging} 

LoRI\cite{zhang_lori_2025} mitigates cross-task interference and catastrophic forgetting by enforcing high sparsity in its task-specific updates, formulated as $\Delta_t = A(B_t \odot M_t)$. By freezing the projection matrix $A$ and selectively updating $B_t$ using sparse masks $M_t$, LoRI achieves task-specific isolation at low memory overhead. However, this design inherently limits model capacity and representational degrees of freedom, particularly under complex or non-stationary task distributions.

In contrast, our proposed method, FRoD, introduces a full-rank, rotation-enhanced adaptation space via sparse perturbations $S_i$, decomposing updates into axis-aligned magnitudes and orthogonal rotational refinements. This structured mechanism preserves the generalized singular spectrum while providing flexibility for task-dependent adaptation, making it especially suitable for continual learning with enhanced expressivity and reduced forgetting.

Moreover, while LoRI enables adapter merging via random subspace alignment, the resulting representations remain constrained within sparse, low-rank manifolds. FRoD, by contrast, learns a globally shared latent basis $V$ and applies sparse rotation-based perturbations $S_i$ atop this shared basis, enabling geometrically meaningful and high-capacity model composition. By explicitly modeling the rotation angle $\alpha$ and preserving spectral coherence throughout model fusion, FRoD enables consistent integration across heterogeneous tasks, achieving superior continual adaptation and compositional generalization relative to LoRI.

\noindent\textbf{Federated Learning}

FRoD exhibits several properties that naturally align with the constraints and goals of federated learning. First, the structured decomposition of task updates into on-axis magnitude and off-axis rotational components promotes linearity and modularity in the parameter space. This factorization not only simplifies update representation but also facilitates lightweight synchronization across clients, making it compatible with gradient compression and communication-efficient protocols.

Second, FRoD enables rapid convergence by preserving spectral stability and injecting minimal yet expressive rotations, which effectively bridge the gap between low-rank initialization and full-rank expressivity. Such convergence behavior is particularly valuable in federated settings where clients are often limited to a few local epochs per round due to latency or privacy constraints.

Third, while traditional full-rank fine-tuning incurs prohibitive memory and communication costs in federated environments, FRoD achieves high-rank adaptation through sparse perturbations layered over shared latent bases. This design supports scalable and parameter-efficient tuning with minimal transmission overhead, as only the sparse update $S_i$ needs to be communicated per task or client.

Taken together, these properties position FRoD as a promising solution for federated learning in resource-constrained scenarios, offering a favorable trade-off between expressivity, communication cost, and convergence speed. We leave a full evaluation of FRoD under federated training regimes as future work.

\putbib[reference]
\end{bibunit}
\end{document}